%% file: main.tex
\documentclass[11pt]{article}
\usepackage{algorithm}
\usepackage{algorithmic}
\usepackage{amsmath}
\usepackage{amssymb}
\usepackage{authblk}
\usepackage{booktabs}
\usepackage[margin=1in]{geometry}
\usepackage{graphicx}
\usepackage{mathtools}
\usepackage{amsthm}
\usepackage[colorlinks=true,citecolor=blue]{hyperref}
\usepackage[section]{placeins}

\newtheorem{proposition}{Proposition}[section]

\title{%
  Decoupled Latent Optimization of Diffusion Models\\
  for Full Waveform Inversion%
}

\author[1]{Chen Min}
\author[1,2]{Zheng Ma\thanks{Corresponding author. E-mail: zhengma@sjtu.edu.cn}}

\affil[1]{School of Mathematical Sciences, Shanghai Jiao Tong University, Shanghai, China}
\affil[2]{CMA-Shanghai, Shanghai Jiao Tong University, Shanghai, China}
\date{}

\begin{document}
\maketitle

\input{content/abstract}

\textbf{Keywords:} Full Waveform Inversion; Diffusion Model; PDE-governed Inverse Problem; Seismic Imaging

\input{content/introduction}
\input{content/method}
\input{content/experiments}
\input{content/conclusion}

\section*{Acknowledgments}
Zheng Ma is supported by NSFC Grant No. 12531016 and Beijing Institute of Applied
Physics and Computational Mathematics funding HX02023-6. Additionally, we also thank Shanghai Institute for Mathematics
and Interdisciplinary Sciences (SIMIS) for their financial support. This research was funded
by SIMIS under grant number SIMIS-ID-2025-ST. The authors are grateful for the resources
and facilities provided by SIMIS, which were essential for the completion of this work.

\bibliographystyle{unsrt}
\bibliography{main_R2}

\clearpage
\input{content/appendix}

\end{document}

%% file: content/abstract.tex
\begin{abstract}
  Full waveform inversion (FWI) recovers subsurface velocity from seismic
  recordings by solving a severely ill-posed, nonconvex PDE-constrained
  optimization. Classical regularizers stabilize the inversion but fail to
  reproduce realistic geological structures; recent diffusion-prior methods
  improve realism at the cost of a fragile trade-off between data fidelity
  and prior consistency.
  We propose Decoupled Latent Optimization (DLO), which relaxes the
  standard latent-optimization formulation into a quadratic-penalty
  objective over an auxiliary physical variable and a latent variable. The
  data-fidelity gradient acts in physical space, the diffusion sampler
  contributes only through a decoded prior sample, and the standard
  smoothed-velocity initialization of classical FWI is preserved.
  On the OpenFWI benchmark, DLO outperforms classical regularizers and
  existing diffusion-based methods under clean, noisy, and missing-trace
  acquisitions. The prior, trained on $70\times70$ OpenFWI models,
  transfers directly to the Marmousi and Overthrust benchmarks, where DLO
  recovers intricate fault structures and remains robust to initialization
  smoothing and measurement noise.
\end{abstract}

%% file: content/introduction.tex
\section{Introduction}
\label{sec:intro}

Inverse problems governed by partial differential equations (PDEs) play a
central role across science and engineering, encompassing medical imaging,
fluid dynamics, and geophysical exploration~\cite{dashti2013bayesian,tarantola2005inverse}.
A prototypical and particularly demanding instance is seismic inversion, which
is widely used to reconstruct quantitative subsurface models from wavefield
recordings collected at the surface and supports applications ranging from
resource exploration to seismic-hazard assessment and environmental
monitoring~\cite{zhang2021rayleigh,sambridge2002monte}.
Over the past decades, a number of methods have been developed to address
this problem, including velocity analysis on stacked
traces~\cite{berkhout1997pushing}, traveltime tomography and migration-based
methods~\cite{zelt1992seismic,clement2001migration}, and Born-approximation linearized
inversion~\cite{hudson1981use,muhumuza2018seismic}.
Among these methods, full-waveform inversion
(FWI)~\cite{tarantola1984inversion,warner2013anisotropic,jakobsen2015full,virieux2009overview}
is set apart by its ability to recover high-resolution subsurface structure,
which it achieves by casting the inversion as a PDE-constrained optimization
that exploits the full phase and amplitude content of the wavefield to align
the simulated and observed seismograms under the wave
equation~\cite{virieux2009overview,fichtner2010full}.

The optimization problem underlying FWI is widely regarded as a challenging
nonconvex problem, owing to the nonlinearity of its forward modeling and the
ill-posedness of the inverse problem itself.
The associated misfit landscape is strongly nonconvex and exhibits numerous
local minima arising from phase ambiguity and limited subsurface
illumination, leaving the inversion highly sensitive to inaccuracies in the
initial velocity model, measurement noise, and incomplete
acquisitions~\cite{virieux2009overview,pratt1999seismic,yao2019tackling,zhang2020fwi,yazdani2020systems,jiao2021one}.
A particularly characteristic difficulty is cycle skipping, where the
gradient steers the simulated wavefield toward a match that is off by a
whole cycle whenever the predicted and observed traces are misaligned by
more than half a period, and the optimization becomes trapped in an
incorrect basin of attraction, converging to inaccurate saddle
points~\cite{gauthier1986two,pladys2021cycle,metivier2016measuring}.

To mitigate these difficulties and improve robustness against imperfect
observations, a variety of remedies have been explored.
Regularization-based approaches retain the standard quadratic misfit and add
an explicit penalty that constrains the velocity model, with Tikhonov
regularization enforcing smoothness on the recovered
field~\cite{asnaashari2013regularized} and total-variation regularization promoting
piecewise-constant structures that preserve sharp
interfaces~\cite{esser2018total,lin2014acoustic,aghamiry2018hybrid}.
By penalizing non-physical features in the recovered velocity, these
regularizers stabilize the optimization and improve convergence accuracy,
and have accordingly become standard baselines in FWI.
A complementary line of work replaces the pointwise quadratic residual by
misfit functionals that are less sensitive to small time and phase shifts,
including multi-scale frequency continuation~\cite{bunks1995multiscale},
phase-based and envelope-based misfits~\cite{bozdaug2011misfit,chi2014full},
adaptive waveform inversion~\cite{warner2016adaptive}, and geometry-aware
metrics based on optimal transport~\cite{yang2018application,metivier2016measuring}.

Alongside these classical strategies, a substantial body of work has
explored machine-learning approaches to FWI.
Direct supervised mappings learn the inverse operator from synthetic
pairs of seismograms and velocity
models~\cite{wu2019inversionnet,zhang2019velocitygan,zhang2020data,kazei2021mapping}.
These networks achieve much higher computational efficiency than
iterative optimization, and the training data provide implicit
regularization for the ill-posed inverse problem, which yield strong
empirical accuracy on FWI benchmarks.
Physics-aware variants embed the wave equation directly into the
learning process.
Physics-informed neural networks penalize PDE residuals on collocation
points~\cite{karniadakis2021physics,lu2021physics,lu2021deepxde}, and
related schemes reparameterize the velocity field as a neural network
whose weights are updated by the FWI
gradient~\cite{yan2023unsupervised}.
A more recent and particularly fruitful paradigm in PDE solving is
operator learning, in which neural networks approximate
function-to-function mappings between infinite-dimensional
spaces~\cite{lu2021learning,zhu2023fourier}.
Analogous operator architectures have also been investigated for
FWI~\cite{huang2025physics}.
Despite these advances, generalization remains the central limitation of
supervised approaches.
Their accuracy degrades on velocity structures unseen during training
and on data contaminated by measurement noise, and the network typically
must be retrained whenever the acquisition geometry changes.

The rapid development of diffusion models in recent years has raised
generative modeling to a new level and opened a new paradigm for solving
inverse problems.
By learning the score field
$\nabla_{\mathbf{x}}\log p_t(\mathbf{x})$ of a family of
Gaussian-corrupted marginals $\{p_t\}_{t\in[0,T]}$, diffusion models
generate samples by solving the associated reverse-time stochastic
differential equation that transports a standard Gaussian back to the
data distribution~\cite{ho2020denoising,song2020score,song2020denoising}.
From an inverse-problem perspective, a pretrained diffusion model embeds
a data-driven prior into the inversion that provides stronger
regularization than handcrafted alternatives and yields improved
robustness to measurement noise and out-of-distribution observations.
A wide range of diffusion-based algorithms have been witnessed in
image-domain inverse
problems~\cite{kawar2022denoising,mardani2024variational,dou2024diffusion,song2023pseudoinverse,chung2022diffusion,choi2021ilvr},
whereas their application to physical, PDE-governed inverse problems
remains
under-explored~\cite{shu2023physics,shan2026pird,wang2026fundiff}.
For FWI in particular, one line of work inserts FWI physical-constraint
steps into the diffusion sampling trajectory, including DiffusionFWI and
its DiffusionILVR extension~\cite{wang2023prior,taufik2025wavenumber} as well as DPS
variants applied to the wave-equation
operator~\cite{taufik2025diffusion,peng2026robust}.
However, the imposed physical-constraint steps are generally inconsistent
with the Gaussian-noised marginals visited by the sampling flow, and
hyperparameter selection that balances data-prior effectiveness against
physical-constraint enforcement becomes a central difficulty.
A second line of work embeds diffusion-model-based regularization terms
directly into the FWI optimization
objective~\cite{shan2026regularization,xie2025diffusion}.
These works collectively demonstrate that diffusion priors can deliver
meaningful structural information for FWI; the form in which the prior
signal is extracted from the pretrained model, and the way it is injected
into the inversion, remain open design choices that motivate the present
work.

In this paper, we develop a general framework, \emph{Decoupled Latent
Optimization} (DLO), that embeds a pretrained diffusion model into
PDE-governed inverse problems as a regularization mechanism.
DLO introduces an auxiliary optimizable latent variable whose decoded
sample approximates the prior projection of the physical variable,
providing a diffusion-prior regularization signal for the physical
inversion.
The framework inherits the strong generalization to out-of-distribution
structures characteristic of diffusion-prior approaches.

We validate DLO on large-scale FWI benchmarks and compare it against
classical regularizers and existing diffusion-based methods.
On the OpenFWI benchmark~\cite{deng2022openfwi}, DLO consistently outperforms
these baselines in reconstruction accuracy and remains robust under
measurement noise and incomplete acquisitions.
We further apply the framework to the Marmousi~\cite{versteeg1994marmousi} and
Overthrust~\cite{aminzadeh19973} benchmarks that lie clearly outside the
training distribution, and the successful reconstructions on these
models imply that DLO scales to larger domains and adapts well to the
complex geological structures encountered in real field data.

The remainder of the paper is organized as follows.
Section~\ref{sec:background} formulates FWI, reviews diffusion generative
models, and surveys existing diffusion-based approaches for inverse problems.
Section~\ref{sec:method_ours} presents Decoupled Latent Optimization and
analyzes its convergence properties.
Section~\ref{sec:experiments} reports numerical experiments on the OpenFWI,
Marmousi, and Overthrust benchmarks.
Section~\ref{sec:conclusion} concludes with a discussion of limitations and
future directions.
Implementation details, hyperparameters, and additional experimental results
are deferred to the appendices.

%% file: content/method.tex
\section{Background and Preliminaries}
\label{sec:background}

\subsection{Full Waveform Inversion}
\label{sec:method_fwi}

Full waveform inversion (FWI) recovers the subsurface velocity field from seismic wavefield
measurements recorded at the surface~\cite{virieux2009overview,tarantola2005inverse}.
In the acoustic approximation, wave propagation in the subsurface domain $\Omega$ is governed by
\begin{equation}\label{eq:wave}
  \left\{
    \begin{aligned}
      & \frac{1}{v(\mathbf{r})^2}\frac{\partial^2 p(\mathbf{r},t)}{\partial t^2}
      = \nabla^2 p(\mathbf{r},t) + s(\mathbf{r},t,\xi), \\
      & p(\mathbf{r},0) = 0, \\
      & p_t(\mathbf{r},0) = 0.
    \end{aligned}
    \right.
  \end{equation}
  where $\mathbf{r}\in\Omega$ is the spatial coordinate, $t$ is time, $\nabla^2$ is the spatial
  Laplacian, $v(\mathbf{r})$ is the spatially varying subsurface velocity, $p(\mathbf{r},t)$ is the pressure wavefield.
  The source term $s(\mathbf{r}, t, \xi)$ is typically a Ricker wavelet emitted from a point source at the surface, represented as follows:
  \begin{equation}
    s(\mathbf{r},t,\xi) = s_0(t)\,\delta(\mathbf{r} - \xi),
  \end{equation}
  with $s_0(t)=(1 - 2\pi^2 f^2 t^2)e^{-\pi^2 f^2 t^2}$ being the amplitude of the Ricker wavelet with frequency $f$, $\delta$ being the Dirac delta function, $\xi$ being the location of the source.
  Following the OpenFWI framework, we use absorbing boundary layers to simulate wave propagation in an unbounded medium, suppressing artificial reflections from the domain boundaries.

  In a typical surface acquisition the medium is probed by a set of sources
  $\{\xi_i\}_{i=1}^{n_s}$. The wavefield is observed through a
  measurement operator $\mathcal{M}$ that measures the wavefield at the receiver
  points $\{\mathbf{r}_j\}_{j=1}^{n_r}$ placed along the surface at a shallow depth,
  $\mathcal{M}\,p(\mathbf{r}, \cdot,t;v,\xi_i)=\{p(\mathbf{r}_j,t;v,\xi_i)\}_{j=1}^{n_r}$.
  For a source $\xi_i$, solving the wave equation~\eqref{eq:wave} with velocity
  $v$ yields the wavefield $p(\mathbf{r},t;v,\xi_i)$, whose measurement is the
  simulated record. Collecting these records over all sources defines the
  forward operator
  \begin{equation}
    \label{eq:sampling}
    \mathcal{F}(v) = \bigl\{\,p(\mathbf{r}_j,t;v,\xi_i)\,\bigr\}_{j=1,\dots,n_r,\;i=1,\dots,n_s},
    \qquad t\in[0,T],
  \end{equation}
  which maps a velocity model to the corresponding wavefield sampled at every
  source--receiver pair.
  Generally, the observed data $\mathbf{d}_\mathrm{obs}$ are the wavefield generated by the
  true velocity model $v^\ast$, measured at the same receivers and corrupted by
  additive measurement noise,
  \begin{equation}
    \label{eq:dobs}
    \mathbf{d}_\mathrm{obs}
    = \bigl\{\,p(\mathbf{r}_j,t;v^\ast,\xi_i) + \eta_i(\mathbf{r}_j,t)\,\bigr\}_{j=1,\dots,n_r,\;i=1,\dots,n_s},
    \qquad \eta_i\sim\mathcal{N}(0,\sigma^2 I),
  \end{equation}
  where $\eta_i$ denotes the measurement noise at each receiver.
  Fig.~\ref{fig:fwi_overview} illustrates this forward-modeling and
  inversion setup.

  \begin{figure}[!htbp]
    \centering
    \includegraphics[width=0.98\linewidth]{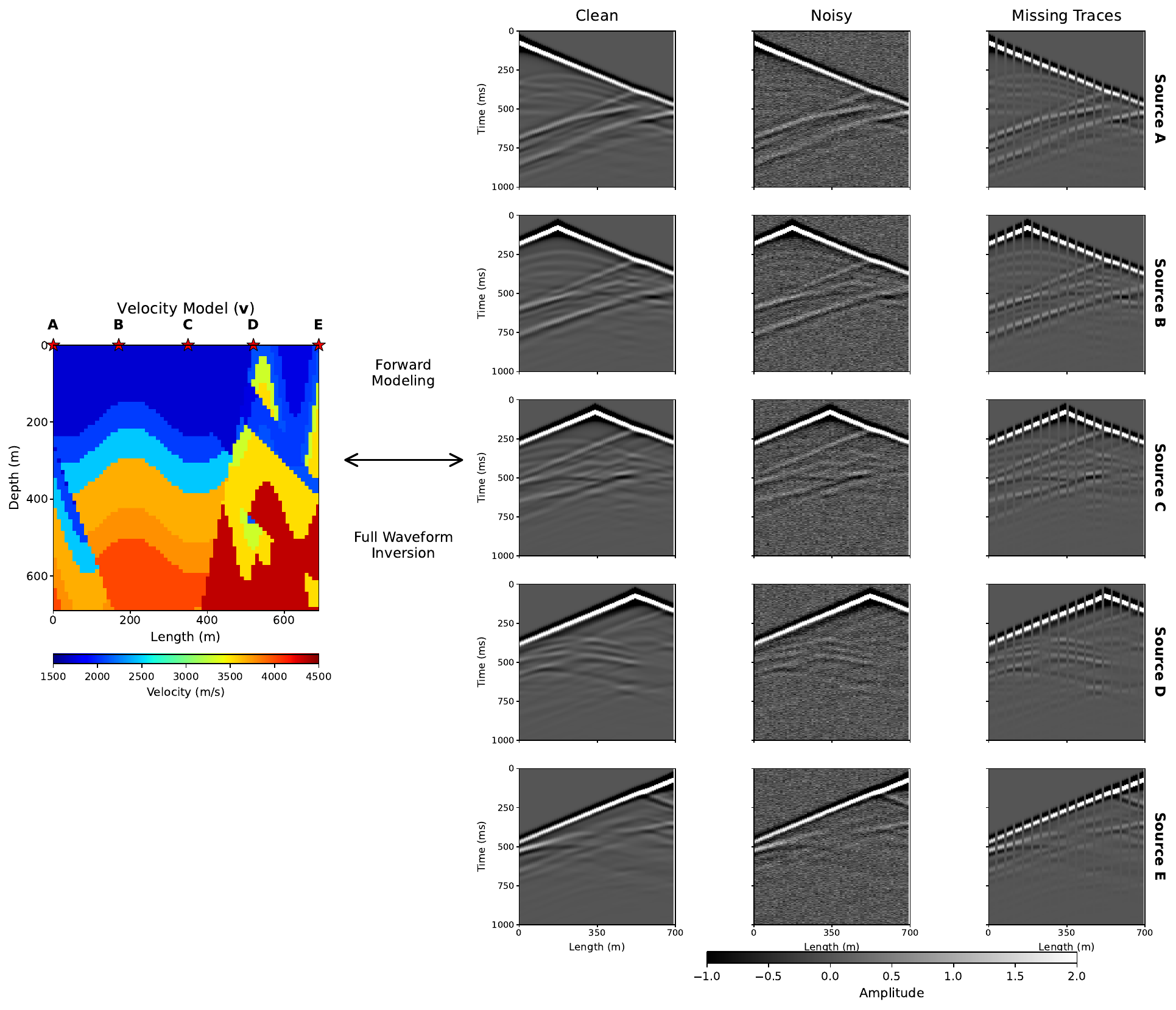}
    \caption{\textbf{Forward modeling and full waveform inversion.}
      A velocity model (left) is excited by five surface sources A--E; the wave
      equation~\eqref{eq:wave} propagates each source through the medium, and
      the resulting wavefield is recorded through time at surface receivers
      (right; rows are sources A--E, columns show the shot gathers under clean,
      noisy, and missing-trace acquisition). FWI inverts this mapping to recover
    $v$ from the observed wavefield.}
    \label{fig:fwi_overview}
  \end{figure}

  Traditional FWI recovers the subsurface velocity $v$ by solving an optimization problem initialized from a rough velocity model (typically a smoothed approximation of the true geology).
  \begin{equation}
    \label{eq:fwi}
    \min_{v}\;
    \bigl\|\mathcal{F}(v) - \mathbf{d}_\mathrm{obs}\bigr\|^2
    + \lambda\,R(v),
  \end{equation}
  where the first term measures the misfit between the observed data $\mathbf{d}_\mathrm{obs}$ and
  the simulated data $\mathcal{F}(v)$, and the second term $R(v)$ is a regularization term weighted by
  $\lambda$ that constrains the solution to be physically plausible and improves the convergence of
  the inversion.
  The gradient of the data misfit is usually computed via the adjoint-state
  method~\cite{tarantola2005inverse}.

  Due to the nonlinearity of the wave equation and the limited coverage of surface observations, FWI
  is a severely ill-posed problem whose optimization landscape contains numerous local minima
  (e.g., cycle-skipping~\cite{pladys2021cycle}).
  Although regularization penalizes non-physical solutions and improves convergence, performance
  remains sensitive to the choice of initial model, the strength and form of the regularization term,
  and measurement noise in the observations, making accurate recovery challenging in practice.

  While classical regularizers such as Tikhonov~\cite{asnaashari2013regularized} and Total Variation
  (TV)~\cite{esser2018total,lin2014acoustic,aghamiry2018hybrid} improve inversion stability, they often fail to produce
  geologically realistic models: Tikhonov tends to over-smooth fine-scale features, while TV
  introduces staircase artifacts at smooth transitions.
  In recent years, deep-learning-based methods have emerged as a promising direction, leveraging
  large-scale data to introduce data-driven regularization that captures complex geological structures
  beyond the reach of handcrafted priors.

  \subsection{Diffusion Generative Models}
  \label{sec:diffusion_prelim}

  A diffusion model defines a continuous family of Gaussian-corrupted marginals
  $\{p_t\}_{t\in[0,T]}$ by progressively adding noise to clean samples
  $x_0\sim p_\mathrm{data}$.
  Under the variance-preserving (VP) corruption~\cite{ho2020denoising,song2020score},
  \begin{equation}
    \label{eq:vp_marginal}
    x_t = \sqrt{\bar\alpha_t}\,x_0 + \sqrt{1-\bar\alpha_t}\,\epsilon,
    \qquad \epsilon\sim\mathcal{N}(0,I),
  \end{equation}
  where $\bar\alpha_t\in(0,1]$ decreases monotonically from $\bar\alpha_0\!=\!1$ to
  $\bar\alpha_T\!\approx\!0$.

  Eq.~\eqref{eq:vp_marginal} yields a sample $x_t\sim p_t$ at each noise
  level, and the family $\{p_t\}_{t\in[0,T]}$ can be regarded as the marginal
  distributions of a continuous-time forward diffusion process described by the
  It\^o stochastic differential equation~\cite{song2020score}
  \begin{equation}
    \label{eq:fwd_sde}
    \mathrm{d}x = f(x,t)\,\mathrm{d}t + g(t)\,\mathrm{d}w,
  \end{equation}
  where $f(\cdot,t)$ is the drift coefficient, $g(t)$ the diffusion coefficient,
  and $w$ a standard Wiener process.
  The VP corruption~\eqref{eq:vp_marginal}
  corresponds to the choice $f(x,t)=-\dfrac{1}{2}\beta(t)\,x$ and
  $g(t)=\sqrt{\beta(t)}$.
  Here $\beta(t)$ is the continuous-time noise-rate schedule, related to the
  corruption coefficient $\bar\alpha_t$ in Eq.~\eqref{eq:vp_marginal} by
  $\beta(t)=-\dfrac{\dot{\bar\alpha}_t}{\bar\alpha_t}$, equivalently
  $\displaystyle\bar\alpha_t=\exp\left(-\int_0^t\beta(s)\,\mathrm{d}s\right)$, so that the two
  schedules encode the same noising process.
  As $t\to T$ the signal-to-noise ratio of the marginal
  $\dfrac{\bar\alpha_t}{1-\bar\alpha_t}$ decreases monotonically to zero, so that
  $p_T\approx\mathcal{N}(0,I)$ is a tractable Gaussian prior.

  A diffusion model generates samples through the reverse form of this SDE,
  gradually denoising a Gaussian sample $x_T\sim\mathcal{N}(0,I)$ back to the
  data distribution by integrating the reverse-time SDE~\cite{ANDERSON1982313}
  \begin{equation}
    \label{eq:rev_sde}
    \mathrm{d}x = \bigl[\,f(x,t) - g(t)^2\,\nabla_{x}\log p_t(x)\,\bigr]\mathrm{d}t
    + g(t)\,\mathrm{d}\bar w,
  \end{equation}
  where $\bar w$ is a standard Wiener process running backwards in time and
  $\mathrm{d}t$ is an infinitesimal negative timestep; the only unknown is the
  score $\nabla_{x}\log p_t(x)$ of each marginal.
  We fit this score by score matching, realized by training a neural network
  $D_\theta(x_t,t)$ to predict the clean sample $x_0$ from the noised observation
  $x_t$ through the VP denoising objective
  \begin{equation}
    \label{eq:denoise_loss}
    \min_{\theta}\;
    \mathbb{E}_{\,x_0\sim p_\mathrm{data},\;t\sim\mathcal{U}(0,T),\;\epsilon\sim\mathcal{N}(0,I)}
    \bigl[\,\omega(t)\,\|D_\theta(x_t,t) - x_0\|^2\,\bigr],
    \quad x_t=\sqrt{\bar\alpha_t}\,x_0+\sqrt{1-\bar\alpha_t}\,\epsilon,
  \end{equation}
  with $\omega(t)>0$ a weighting function; its optimum recovers
  $\mathbb{E}[x_0\!\mid\!x_t]$ and hence the score $\nabla_{x_t}\log p_t(x_t)$
  through Tweedie's formula.

  The sampling process is realized either by the reverse-time SDE~\eqref{eq:rev_sde}
  itself or by its equivalent probability-flow ODE derived from the
  Fokker--Planck equation~\cite{song2020score}, thereby transporting a standard
  Gaussian sample to the clean data distribution.
  Here we consider the standard DDIM sampling process~\cite{song2020denoising}, a
  discretization of the probability-flow ODE.
  We introduce the notation $R_\theta\!:\!z\!\mapsto\!x_0$ to represent this
  deterministic mapping from the initial Gaussian latent
  $x_T=z\sim\mathcal{N}(0,I)$ to the clean sample $x_0$, defined by iterating
  \begin{equation}
    \label{eq:ddim_update_main}
    x_{t-1} = \sqrt{\bar\alpha_{t-1}}\,D_\theta(x_t,t)
    + \sqrt{1-\bar\alpha_{t-1}}\,
    \frac{x_t-\sqrt{\bar\alpha_t}\,D_\theta(x_t,t)}{\sqrt{1-\bar\alpha_t}}.
  \end{equation}
  We treat $R_\theta$ as a learned generator that transports the standard Gaussian
  to (an approximation of) the data distribution $p_\mathrm{data}$; every output
  $R_\theta(z)$ is by construction a sample of the learned prior.
  For the specific theory and detailed derivations of diffusion models, we refer to
  Appendix~\ref{sec:bg_diffusion}.

  \subsection{Diffusion Priors for Inverse Problems}

  Diffusion models provide an efficient and high-quality mechanism for sampling
  from the data prior.
  Embedding this pretrained prior into the solution of inverse problems, however,
  requires carefully designed algorithms that specify how the prior signal is
  extracted from the pretrained model and how it interacts with the physics-based
  data fidelity.
  Existing approaches inject the diffusion prior into inversion in several
  broadly different ways.

  \paragraph{Physical constraint embedded in the reverse sampling process.}
  A classical realization in image-domain inverse problems is
  DPS~\cite{chung2022diffusion} / $\Pi$GDM~\cite{song2023pseudoinverse},
  which approximate the conditional
  score of the noisy marginal by Bayes' rule,
  \begin{equation}
    \label{eq:cond_score}
    \nabla_{x_t}\log p_t\bigl(x_t\!\mid\!\mathbf{d}_\mathrm{obs}\bigr)
    = \nabla_{x_t}\log p_t(x_t)
    + \nabla_{x_t}\log p_t \bigl(\mathbf{d}_\mathrm{obs}\!\mid\!x_t\bigr),
  \end{equation}
  where the intractable likelihood term $\nabla_{x_t}\log p_t\bigl(\mathbf{d}_\mathrm{obs}\!\mid\!x_t\bigr)$ is approximated by evaluating the forward operator at the denoised estimate $\mathbb{E}[x_0\!\mid\!x_t]$.
  A separate line of work, represented by DiffusionFWI~\cite{wang2023prior} and
  DiffusionILVR~\cite{taufik2025wavenumber}, bypasses the conditional score:
  instead, an FWI gradient step (or a low-pass replacement) is interleaved
  directly between successive denoising steps of the unconditional sampler.
  To respect both the prior and the observations, these methods must balance
  the physical constraint against the diffusion prior at every sampling step.
  However, this balance is fragile: the likelihood approximation is inaccurate,
  and the physics-based constraint is inherently inconsistent with the
  noise-perturbed marginals $\{p_t\}$ encountered during sampling.
  As a result, the schemes are sensitive to the choice of guidance strategy
  and hyperparameters, particularly for nonlinear PDE-governed inverse
  problems such as FWI.

  \paragraph{Pretrained Denoiser as a Learned Regularizer}
  A second class of methods retains the physical-space FWI objective and
  introduces the diffusion prior as a regularizer evaluated through the
  pretrained denoiser:
  \begin{equation}
    \label{eq:diffusion_reg}
    \min_{v}\;\bigl\|\mathcal{F}(v)-\mathbf{d}_\mathrm{obs}\bigr\|^2
    + \lambda\,\mathcal{L}\bigl(v,\,\tilde v_\theta(v)\bigr),
  \end{equation}
  where $\tilde v_\theta(v)$ is a prior velocity field associated with the
  current iterate $v$, computed from $v$ through one or more evaluations
  of the pretrained denoiser $D_\theta$; it can be read as a
  denoiser-based prior estimate of $v$, and $\mathcal{L}$ penalizes
  deviation of $v$ from $\tilde v_\theta$, supplying a gradient that
  points toward the learned manifold.
  Different instantiations of this framework differ in how
  $\tilde v_\theta(v)$ is constructed from $v$ and $D_\theta$ and in the
  choice of penalty $\mathcal{L}$.
  For example, RED-diff~\cite{mardani2024variational} and its wave-equation counterpart
  RED-DiffEq~\cite{shan2026regularization} corrupt the current iterate
  $v$ to a randomly drawn noise level $t$ with realization $\epsilon$ and
  read off a single denoiser output,
  \begin{equation}
    \label{eq:prior_velocity}
    \tilde v_\theta(v)
    :=\; D_\theta\bigl(\sqrt{\bar\alpha_t}\,v + \sqrt{1-\bar\alpha_t}\,\epsilon,\;t\bigr),
    \qquad \epsilon\sim\mathcal{N}(0,I),
  \end{equation}
  with $\mathcal{L}$ taken from a variational lower bound on the diffusion
  log-likelihood.
  A common characteristic of both approaches is that the optimized
  solution is not drawn from the diffusion prior. As a result, the
  regularizing effect of the learned prior is indirect, and
  reconstruction quality can depend on the choice of regularization
  form, guidance schedule, and hyperparameters.

  \paragraph{Latent Optimization through the Sampler}
  Another line of work, applicable to inverse problems with a differentiable
  forward operator, replaces the unknown field by the latent input to the
  diffusion sampler and minimizes the data misfit through the composed
  map $\mathcal{F}\circ R_\theta$:
  \begin{equation}
    \label{eq:latent_opt_direct}
    \min_{z}\;\bigl\|\mathcal{F}(R_\theta(z))-\mathbf{d}_\mathrm{obs}\bigr\|^2,
  \end{equation}
  in which the sampler $R_\theta$ acts as a preconditioner that
  constrains every iterate $R_\theta(z)$ to lie on the learned prior
  manifold $\mathcal{M} = \bigl\{\,R_\theta(z)\mid z\sim\mathcal{N}(0,I)\,\bigr\}$.
  This paradigm, pioneered by D-Flow~\cite{ben2024d} and
  DMPlug~\cite{wang2024dmplug} for image-domain inverse problems, achieves
  strict prior consistency without requiring a hand-designed regularizer
  or a guidance schedule.

  However, for physical inverse problems governed by partial differential
  equations such as FWI, directly applying the latent-optimization
  formulation of Eq.~\eqref{eq:latent_opt_direct} faces two
  fundamental obstacles.
  First, the forward operator $\mathcal{F}$ involves solving a strongly
  nonlinear PDE; composing it with the sampler $R_\theta$ chains two
  nonlinear maps, and the resulting loss landscape combines the
  cycle-skipping minima of classical FWI with the nonconvexity introduced
  by backpropagation through the PF-ODE. This introduces additional saddle
  points and destabilizes gradients.
  Second, classical FWI relies on a physically informed
  initialization---typically a smoothed version of the true velocity---to
  land in a favorable basin of attraction. The strict latent formulation,
  however, constrains every iterate to the learned prior manifold
  $\mathcal{M}$ and precludes such physical-space initialization: the
  natural choice $z\sim\mathcal{N}(0,I)$ produces only random prior
  samples, yielding optimization that is sensitive to the initial seed.

  These challenges motivate the decoupled formulation we introduce in the
  next section.

  \section{Decoupled Latent Optimization for FWI}
  \label{sec:method_ours}

  To solve highly nonlinear and ill-posed inverse problems such as FWI, and to
  address these two obstacles, we develop \emph{Decoupled Latent Optimization}
  (DLO).

  We begin by observing that the latent-optimization
  problem~\eqref{eq:latent_opt_direct} is equivalent to the
  equality-constrained formulation
  \begin{equation}
    \label{eq:constrained_form}
    \min_{v,\,z}\;
    \bigl\|\mathcal{F}(v) - \mathbf{d}_\mathrm{obs}\bigr\|^2
    \quad\text{subject to}\quad
    v = R_\theta(z),
  \end{equation}
  since the constraint eliminates the auxiliary variable $v$ and recovers
  Eq.~\eqref{eq:latent_opt_direct} exactly.
  We then relax the hard constraint $v = R_\theta(z)$ by introducing a
  quadratic penalty term, yielding the decoupled objective
  \begin{equation}
    \label{eq:dlo}
    \min_{v,\,z}\;
    \bigl\|\mathcal{F}(v) - \mathbf{d}_\mathrm{obs}\bigr\|^2
    + \lambda\,\bigl\|v - R_\theta(z)\bigr\|^2,
  \end{equation}
  where $\lambda>0$ is a fixed penalty parameter.
  Eq.~\eqref{eq:dlo} is the standard quadratic penalty
  method~\cite{bertsekas1999nonlinear} applied to the constrained
  problem~\eqref{eq:constrained_form}; a rigorous analysis of the
  relationship between the two formulations is given in
  Section~\ref{sec:app_penalty}.

  This decoupling avoids gradient backpropagation through the composed
  PDE-solver--sampler chain and permits flexible physical-space
  initialization. The $z$-update searches for the projection of $v$ onto
  the prior manifold $\mathcal{M}$, while the physics-based optimization
  of $v$ is regularized by the prior velocity field $R_\theta(z)$.

  In practice, we initialize $v$ with a Gaussian-smoothed version of the
  true velocity (the standard starting point in classical FWI) and draw
  $z^{(0)}\sim\mathcal{N}(0,I)$. The penalty parameter $\lambda$ is held
  fixed, avoiding the progressive ill-conditioning that arises when
  $\lambda$ is driven to infinity in classical penalty schemes.

  \subsection{Alternating Optimization}

  In practice we minimize Eq.~\eqref{eq:dlo} by alternating gradient descent on
  $v$ and $z$.
  At iteration $k$, the physical variable $v$ is updated by
  \begin{equation}
    \label{eq:v_sub}
    \mathcal{L}_v\bigl(v;\,z^{(k)}\bigr)
    = \bigl\|\mathcal{F}(v) - \mathbf{d}_\mathrm{obs}\bigr\|^2
    + \lambda\,\bigl\|v - R_\theta(z^{(k)})\bigr\|^2,
  \end{equation}
  whose gradient with respect to $v$ is
  \begin{equation}
    \label{eq:v_grad}
    \nabla_{\!v}\mathcal{L}_v
    = \nabla_{\!v}\bigl\|\mathcal{F}(v) - \mathbf{d}_\mathrm{obs}\bigr\|^2
    + 2\lambda\bigl(v - R_\theta(z^{(k)})\bigr),
  \end{equation}
  where the data-misfit gradient is computed efficiently by the
  adjoint-state method~\cite{tarantola2005inverse}.
  The $v$-update is then
  \begin{equation}
    \label{eq:v_update}
    v^{(k+1)} = v^{(k)} - \eta_v\,\nabla_{\!v}\mathcal{L}_v\bigl(v^{(k)};z^{(k)}\bigr).
  \end{equation}
  Given the updated physical variable $v^{(k+1)}$, the latent variable $z$ is
  updated by minimizing the coupling loss
  \begin{equation}
    \label{eq:z_sub}
    \mathcal{L}_z\bigl(z;\,v^{(k+1)}\bigr)
    = \bigl\|R_\theta(z) - v^{(k+1)}\bigr\|^2.
  \end{equation}

  To compute the gradient of this objective, we backpropagate through
  the deterministic DDIM chain.
  Let the sampler be unrolled over a decreasing sequence of timesteps
  $T = t_0 > t_1 > \cdots > t_n = 0$.
  Differentiating the DDIM update~\eqref{eq:ddim_update_main} with respect to
  $x_{t_i}$ gives the per-step Jacobian
  \begin{equation}
    \label{eq:ddim_jacobian_step}
    \frac{\partial x_{t_{i-1}}}{\partial x_{t_i}}
    = \Bigl(\sqrt{\bar\alpha_{t_{i-1}}}
      - \frac{\sqrt{\bar\alpha_{t_i}}\sqrt{1-\bar\alpha_{t_{i-1}}}}
    {\sqrt{1-\bar\alpha_{t_i}}}\Bigr)
    \frac{\partial D_\theta(x_{t_i},t_i)}{\partial x_{t_i}}
    + \frac{\sqrt{1-\bar\alpha_{t_{i-1}}}}{\sqrt{1-\bar\alpha_{t_i}}}\,I.
  \end{equation}
  The Jacobian of the full sampler $R_\theta\!:\!x_T\!\mapsto\!x_0$ is then the
  product of per-step Jacobians,
  \begin{equation}
    \label{eq:ddim_jacobian}
    \frac{\partial R_\theta(z)}{\partial z}
    = \prod_{i=1}^{n}
    \frac{\partial x_{t_{i-1}}}{\partial x_{t_i}},
  \end{equation}
  which is evaluated by automatic differentiation through the unrolled
  computation graph.
  With a small number of steps ($n = 3$), computing this gradient to
  optimize the initial noise $z$ effectively searches for the projection of
  the physical velocity $v$ onto the prior manifold $\mathcal{M}$.
  The $z$-update then reads
  \begin{equation}
    \label{eq:z_update}
    z^{(k+1)} = z^{(k)} - \eta_z\,
    \nabla_{\!z}\bigl\|R_\theta(z^{(k)}) - v^{(k+1)}\bigr\|^2.
  \end{equation}
  The full procedure is summarized in Algorithm~\ref{alg:dlo}.

  \begin{algorithm}[!htbp]
    \caption{Decoupled Latent Optimization (DLO) for FWI}
    \label{alg:dlo}
    \begin{algorithmic}[1]
      \REQUIRE observed data $\mathbf{d}_\mathrm{obs}$; forward operator
      $\mathcal{F}$; pretrained DDIM sampler $R_\theta$ with schedule
      $\{t_i\}_{i=0}^{n}$; regularization weight $\lambda$; learning rates
      $\eta_v,\eta_z$; smoothed initial velocity $v^{(0)}$; number of
      outer iterations $K$.
      \STATE $z^{(0)} \sim \mathcal{N}(\mathbf{0},I)$
      \hfill \COMMENT{latent initialization (sole source of randomness)}
      \FOR{$k = 0,\dots,K-1$}
      \STATE $v^{(k+1)} \gets
      v^{(k)} - \eta_v\,\nabla_v\mathcal{L}_v\bigl(v^{(k)};z^{(k)}\bigr)$
      \hfill \COMMENT{$v$-step on Eq.~\eqref{eq:v_sub}}
      \STATE $z^{(k+1)} \gets
      z^{(k)} - \eta_z\,\nabla_z\mathcal{L}_z\bigl(z^{(k)};v^{(k+1)}\bigr)$
      \hfill \COMMENT{$z$-step on Eq.~\eqref{eq:z_sub}}
      \ENDFOR
      \RETURN $\hat v = v^{(K)}$
    \end{algorithmic}
  \end{algorithm}

  Since DDIM is the deterministic discretization of the
  probability-flow ODE, the reconstruction $\hat v$ is deterministic
  with respect to the initial latent vector $z^{(0)}\sim\mathcal{N}(0,I)$.
  Drawing $z^{(0)}$ independently from $\mathcal{N}(0,I)$ therefore
  induces a distribution over data-consistent velocity models, which is
  the basis of the uncertainty quantification in
  Section~\ref{sec:exp_uq}.

  A schematic overview of the DLO procedure is provided in
  Fig.~\ref{fig:dlo_flowchart}; implementation details, hyperparameter
  settings, and computational cost are given in
  Appendix~\ref{sec:methods_bg}.

  \begin{figure}[!htbp]
    \centering
    \includegraphics[width=0.98\linewidth]{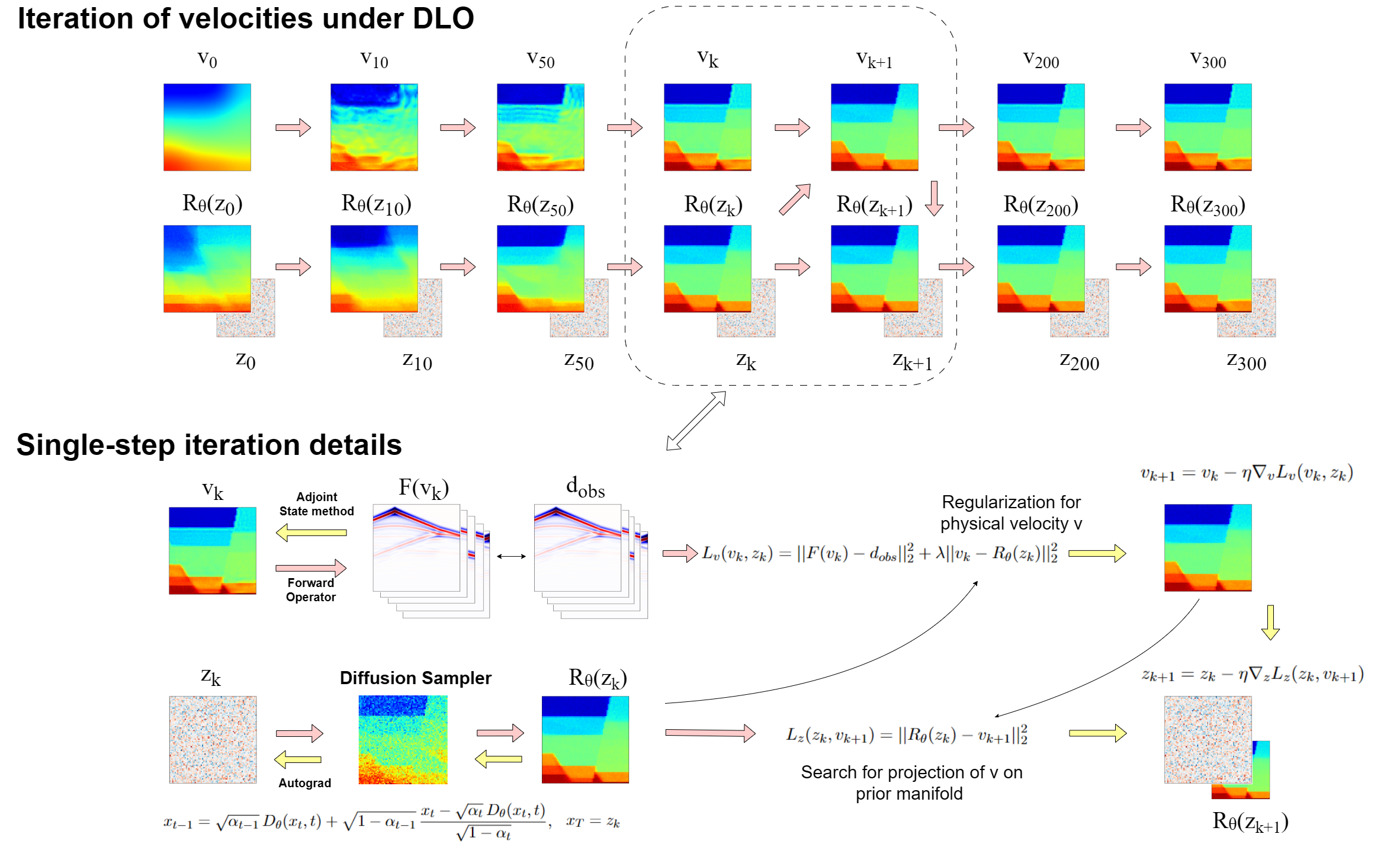}
    \caption{\textbf{An overview of Decoupled Latent Optimization for full waveform
      inversion.}
      \textbf{Top:} the evolution of the physical velocity $v$ and the prior velocity
      $v_\mathrm{gen}=R_\theta(z)$ over the course of the inversion.
      \textbf{Bottom:} a detailed view of a single iteration, showing the alternating
    $v$-update and $z$-update.}
    \label{fig:dlo_flowchart}
  \end{figure}

  \subsection{Convergence of the Penalty Relaxation}
  \label{sec:app_penalty}

  The DLO formulation~\eqref{eq:dlo} replaces the strictly constrained
  latent optimization~\eqref{eq:latent_opt_direct} with the penalty
  objective
  \begin{equation}
    \label{eq:penalty_app}
    \Phi_\lambda(v,z)
    = \bigl\|\mathcal{F}(v) - \mathbf{d}_\mathrm{obs}\bigr\|^2
    + \lambda\,\bigl\|v - R_\theta(z)\bigr\|^2,
  \end{equation}
  which penalizes deviation of the physical velocity $v$ from the decoded
  prior sample $R_\theta(z)$.
  This section provides a rigorous justification for the penalty relaxation
  by establishing that, as $\lambda\to\infty$, the minimizers of
  $\Phi_\lambda$ converge to minimizers of the original constrained problem
  $\min_v\|\mathcal{F}(v)-\mathbf{d}_\mathrm{obs}\|^2$ subject to
  $v=R_\theta(z)$.
  The argument follows the standard quadratic penalty method
  framework~\cite{bertsekas1999nonlinear}.

  We write the data-misfit functional as
  $\mathcal{J}(v)=\|\mathcal{F}(v)-\mathbf{d}_\mathrm{obs}\|^2$, which is
  continuous whenever the forward PDE operator $\mathcal{F}$ is well-posed.
  The DDIM sampler $R_\theta:\mathbb{R}^d\to\mathbb{R}^n$ is a composition
  of affine transforms and pretrained neural-network evaluations and is
  therefore continuous.
  The constrained problem and its optimal value are
  \begin{equation}
    \label{eq:constrained_app}
    \mathcal{J}^*
    = \inf_{v,\,z}\;\{\,\mathcal{J}(v) \mid v = R_\theta(z)\,\}
    = \inf_{z}\,\mathcal{J}(R_\theta(z)).
  \end{equation}
  We assume the feasible set is nonempty and $\mathcal{J}^*>-\infty$.

  \begin{proposition}[Exact minimization]\label{prop:penalty_exact}
    Let $\{\lambda_k\}_{k=0}^\infty$ be a positive sequence with
    $\lambda_k\to\infty$, and for each $k$ let
    \begin{equation}
      \label{eq:exact_min}
      (v^k,\,z^k) \in
      \operatorname*{arg\,min}_{v,\,z}\;
      \mathcal{J}(v) + \lambda_k\,\|v - R_\theta(z)\|^2.
    \end{equation}
    Assume that a global minimizer exists for each $k$.
    Then any limit point of $\{(v^k,z^k)\}$ is a global minimizer of the
    constrained problem~\eqref{eq:constrained_app}, i.e.\ it satisfies
    $\bar v = R_\theta(\bar z)$ and $\mathcal{J}(\bar v) = \mathcal{J}^*$.
  \end{proposition}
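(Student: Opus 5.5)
The plan is the classical quadratic-penalty argument of Bertsekas, using only the continuity of $\mathcal{J}$ and $R_\theta$ recorded above. Write $\Phi_k(v,z)=\mathcal{J}(v)+\lambda_k\|v-R_\theta(z)\|^2$ and $p_k=\Phi_k(v^k,z^k)$.

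The first step compares each penalized minimizer with feasible points. For any feasible pair $(R_\theta(z),z)$ the penalty term vanishes. Global optimality of $(v^k,z^k)$ then gives
\begin{equation*}
\mathcal{J}(v^k)\le \mathcal{J}(v^k)+\lambda_k\|v^k-R_\theta(z^k)\|^2 = p_k \le \mathcal{J}(R_\theta(z))
\end{equation*}
for every $z$. Taking the infimum over $z$ yields $p_k\le \mathcal{J}^*$ for all $k$.

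The second step establishes feasibility of any limit point. Let $(\bar v,\bar z)$ be the limit of a subsequence $(v^k,z^k)_{k\in K}$. By continuity, $\mathcal{J}(v^k)\to\mathcal{J}(\bar v)$ along $K$, so $\mathcal{J}(v^k)$ is bounded below on $K$ by some constant $c$. Rearranging the first step gives
\begin{equation*}
\|v^k-R_\theta(z^k)\|^2\le \frac{\mathcal{J}^*-c}{\lambda_k}\to 0 .
\end{equation*}
Here finiteness of $\mathcal{J}^*$ is used: it is below $+\infty$ because the feasible set is nonempty, and above $-\infty$ by assumption. Continuity of $R_\theta$ then gives $\|\bar v-R_\theta(\bar z)\|=0$, i.e. $\bar v=R_\theta(\bar z)$.

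The third step establishes optimality. Since $\bar v=R_\theta(\bar z)$, the point $(\bar v,\bar z)$ is feasible, so $\mathcal{J}(\bar v)\ge\mathcal{J}^*$. Conversely, dropping the nonnegative penalty in the first step gives $\mathcal{J}(v^k)\le\mathcal{J}^*$ for every $k$. Passing to the limit along $K$ by continuity gives $\mathcal{J}(\bar v)\le\mathcal{J}^*$. Hence $\mathcal{J}(\bar v)=\mathcal{J}^*$. As a byproduct, $p_k\to\mathcal{J}^*$ and $\lambda_k\|v^k-R_\theta(z^k)\|^2\to0$ along $K$.

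The argument is short, and the only delicate point is the lower bound used in the second step. In Bertsekas' setting the objective is bounded below globally, but here only $\mathcal{J}^*>-\infty$ is assumed. Restricting to the convergent subsequence and using continuity of $\mathcal{J}$ at $\bar v$ avoids needing a global bound. In the present application the point is moot, since $\mathcal{J}\ge0$ as a squared norm. The continuity of $\mathcal{J}$, which depends on the well-posedness of the forward PDE, is taken as stated in the paper.
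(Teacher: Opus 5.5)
Your proof is correct and follows the paper's argument step for step. You compare with feasible pairs to get $\mathcal{J}(v^k)+\lambda_k\|v^k-R_\theta(z^k)\|^2\le\mathcal{J}^*$, deduce feasibility of the limit from continuity of $R_\theta$, then pass to the limit in $\mathcal{J}(v^k)\le\mathcal{J}^*$. Your handling of the feasibility bound is slightly more careful than the paper's: the paper says $\mathcal{J}(v^k)$ is bounded \emph{above}, but what is actually needed is the lower bound you supply, via continuity along the convergent subsequence or via $\mathcal{J}\ge 0$.
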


  \begin{proof}
    Let $(\bar v,\bar z)$ be a limit point of $\{(v^k,z^k)\}$ and pass to a
    convergent subsequence $(v^k,z^k)\to(\bar v,\bar z)$.
    Since $(v^k,z^k)$ is a global minimizer of $\Phi_{\lambda_k}$,
    \begin{equation}
      \label{eq:pf_glob}
      \mathcal{J}(v^k) + \lambda_k\,\|v^k-R_\theta(z^k)\|^2
      \;\le\; \mathcal{J}(v) + \lambda_k\,\|v-R_\theta(z)\|^2
      \qquad\forall\,(v,z).
    \end{equation}
    Restricting the right-hand side to feasible pairs, for which
    $v=R_\theta(z)$ and the penalty term vanishes, yields
    $\mathcal{J}(v^k)+\lambda_k\|v^k-R_\theta(z^k)\|^2\le\mathcal{J}(v)$
    for every feasible $(v,z)$, and taking the infimum over all such pairs
    gives
    \begin{equation}
      \label{eq:pf_opt}
      \mathcal{J}(v^k) + \lambda_k\,\|v^k-R_\theta(z^k)\|^2
      \;\le\; \mathcal{J}^*.
    \end{equation}
    In particular, the nonnegativity of the penalty term implies
    $\mathcal{J}(v^k)\le\mathcal{J}^*$ for all $k$.

    Rearranging~\eqref{eq:pf_opt} yields the bound
    \begin{equation}
      \label{eq:feas_bound}
      0 \;\le\; \|v^k-R_\theta(z^k)\|^2
      \;\le\; \frac{\mathcal{J}^* - \mathcal{J}(v^k)}{\lambda_k}.
    \end{equation}
    Since $\mathcal{J}^*$ is finite and $\mathcal{J}(v^k)$ is bounded above
    by~\eqref{eq:pf_opt}, the right-hand side tends to zero as
    $\lambda_k\to\infty$; hence
    $\|v^k-R_\theta(z^k)\|\to0$.
    By continuity of $R_\theta$, it follows that
    $\|\bar v-R_\theta(\bar z)\|=0$, i.e.\ $\bar v=R_\theta(\bar z)$;
    thus $(\bar v,\bar z)$ is feasible for the constrained
    problem~\eqref{eq:constrained_app}.

    Finally, taking limits in $\mathcal{J}(v^k)\le\mathcal{J}^*$ and using
    the continuity of $\mathcal{J}$ yields
    $\mathcal{J}(\bar v)\le\mathcal{J}^*$, while feasibility gives
    $\mathcal{J}(\bar v)\ge\mathcal{J}^*$ by definition of the infimum.
    Hence $\mathcal{J}(\bar v)=\mathcal{J}^*$, and $(\bar v,\bar z)$ is a
    global minimizer of~\eqref{eq:constrained_app}.
  \end{proof}

  \begin{proposition}[Inexact minimization]\label{prop:penalty_inexact}
    Assume that $\mathcal{J}$ and $R_\theta$ are continuously
    differentiable.
    Let $\{\lambda_k\}$ be a sequence with $\lambda_k\to\infty$, and let
    $\{(v^k,z^k)\}$ be a sequence of approximate stationary points
    satisfying
    \begin{equation}
      \label{eq:inexact_app}
      \|\nabla\Phi_{\lambda_k}(v^k,z^k)\| \le \varepsilon_k,
      \qquad \varepsilon_k \to 0.
    \end{equation}
    If $(v^k,z^k)\to(\bar v,\bar z)$, then
    \begin{enumerate}
      \item[(i)] $\bar v = R_\theta(\bar z)$;
      \item[(ii)] the vectors
        $\mu^k \coloneqq 2\lambda_k(v^k-R_\theta(z^k))$ converge to
        $\bar\mu\in\mathbb{R}^n$, and
        $\nabla\mathcal{J}(\bar v) + \bar\mu = 0$.
    \end{enumerate}
  \end{proposition}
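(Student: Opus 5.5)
The plan is to read off both conclusions directly from the two blocks of the gradient of $\Phi_{\lambda_k}$. No constraint qualification should be needed, because $v$ enters the penalty with identity Jacobian. With $\mu^k = 2\lambda_k\bigl(v^k-R_\theta(z^k)\bigr)$, the chain rule gives
\begin{equation*}
  \nabla_v\Phi_{\lambda_k}(v^k,z^k) = \nabla\mathcal{J}(v^k) + \mu^k,
  \qquad
  \nabla_z\Phi_{\lambda_k}(v^k,z^k) = -\Bigl(\frac{\partial R_\theta(z^k)}{\partial z}\Bigr)^{\!\top}\mu^k .
\end{equation*}
Both blocks are bounded in norm by $\|\nabla\Phi_{\lambda_k}(v^k,z^k)\|\le\varepsilon_k$.

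\textbf{Step 1: the multipliers converge.} I will prove (ii) first, because (i) follows from it. From the $v$-block, $\|\mu^k + \nabla\mathcal{J}(v^k)\|\le\varepsilon_k$. Since $\mathcal{J}$ is $C^1$ and $v^k\to\bar v$, we have $\nabla\mathcal{J}(v^k)\to\nabla\mathcal{J}(\bar v)$. Because $\varepsilon_k\to0$, this gives $\mu^k\to\bar\mu := -\nabla\mathcal{J}(\bar v)$, and hence $\nabla\mathcal{J}(\bar v)+\bar\mu=0$.

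\textbf{Step 2: feasibility.} Convergent sequences are bounded, so $\|\mu^k\|\le M$ for some $M$. Then
\begin{equation*}
  \|v^k - R_\theta(z^k)\| = \frac{\|\mu^k\|}{2\lambda_k} \le \frac{M}{2\lambda_k}\to 0 .
\end{equation*}
$R_\theta$ is continuous, and $(v^k,z^k)\to(\bar v,\bar z)$. Passing to the limit therefore gives $\bar v=R_\theta(\bar z)$, which is (i).

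\textbf{Step 3 (optional strengthening).} The $z$-block gives $\|(\partial_z R_\theta(z^k))^\top\mu^k\|\le\varepsilon_k$. By continuity of the Jacobian of $R_\theta$, this passes to the limit as $(\partial_z R_\theta(\bar z))^\top\bar\mu=0$. Hence $(\bar v,\bar z,\bar\mu)$ satisfies the full KKT system of~\eqref{eq:constrained_form} with Lagrangian $\mathcal{J}(v)+\mu^\top(v-R_\theta(z))$. Equivalently, $\bar z$ is a stationary point of $z\mapsto\mathcal{J}(R_\theta(z))$, the original latent problem~\eqref{eq:latent_opt_direct}.

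\textbf{Main obstacle.} The only delicate point in the generic penalty-method argument is proving that the multipliers $\mu^k$ stay bounded. That step usually needs a constraint qualification, namely a full-rank Jacobian of the constraint. Here it is immediate: the constraint $v-R_\theta(z)=0$ has Jacobian $[\,I,\ -\partial_z R_\theta\,]$, whose $v$-block is the identity. This is why $\mu^k$ can be expressed explicitly through $\nabla\mathcal{J}(v^k)$. I will state this explicitly so that it is clear no regularity of $R_\theta$ beyond $C^1$ is used.
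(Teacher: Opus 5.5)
Your proposal is correct and follows the paper's proof essentially step for step. The paper also reads $\mu^k\to\bar\mu=-\nabla\mathcal{J}(\bar v)$ off the $v$-block, deduces feasibility from the boundedness of $\mu^k$ via $\|v^k-R_\theta(z^k)\|=\|\mu^k\|/(2\lambda_k)$, and finally passes to the limit in the $z$-block to get $\nabla R_\theta(\bar z)^{\mathsf{T}}\bar\mu=0$, exactly as in your Step~3. Your remark that no constraint qualification is needed, because the $v$-block of the constraint Jacobian is the identity, correctly explains why the multiplier bound comes for free here.
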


  \begin{proof}
    The joint gradient of $\Phi_{\lambda_k}$ with respect to $(v,z)$ is
    \begin{equation*}
      \nabla\Phi_{\lambda_k}(v,z)
      =
      \begin{pmatrix}
        \nabla\mathcal{J}(v) + 2\lambda_k(v-R_\theta(z)) \\[4pt]
        -2\lambda_k\,\nabla R_\theta(z)^{\mathsf{T}}(v-R_\theta(z))
      \end{pmatrix}.
    \end{equation*}
    Define $\mu^k \coloneqq 2\lambda_k(v^k-R_\theta(z^k))$.
    Then condition~\eqref{eq:inexact_app} implies
    \begin{align}
      \nabla\mathcal{J}(v^k) + \mu^k &\to 0, \label{eq:gv}\\
      \nabla R_\theta(z^k)^{\mathsf{T}}\mu^k &\to 0. \label{eq:gz}
    \end{align}

    From~\eqref{eq:gv} we have
    $\mu^k = -\nabla\mathcal{J}(v^k) + o(1)$.
    Since $v^k\to\bar v$ and $\nabla\mathcal{J}$ is continuous,
    $\nabla\mathcal{J}(v^k)\to\nabla\mathcal{J}(\bar v)$, and therefore
    \begin{equation}
      \label{eq:mu_conv}
      \mu^k\to\bar\mu \coloneqq -\nabla\mathcal{J}(\bar v).
    \end{equation}
    In particular, $\{\mu^k\}$ is bounded.

    From the definition $\mu^k = 2\lambda_k(v^k-R_\theta(z^k))$,
    \begin{equation}
      \label{eq:feas_inexact}
      \|v^k-R_\theta(z^k)\|
      = \frac{\|\mu^k\|}{2\lambda_k} \;\to\; 0,
    \end{equation}
    so by continuity of $R_\theta$ we obtain $\bar v = R_\theta(\bar z)$,
    establishing (i).

    Finally, since $R_\theta$ is $C^1$, $\nabla R_\theta$ is continuous; together
    with $z^k\to\bar z$ and $\mu^k\to\bar\mu$, we may take the limit
    in~\eqref{eq:gz} to obtain $\nabla R_\theta(\bar z)^{\mathsf{T}}\bar\mu = 0$,
    while~\eqref{eq:gv} gives $\nabla\mathcal{J}(\bar v) + \bar\mu = 0$.
    These are precisely the first-order necessary conditions for the constrained
    problem~\eqref{eq:constrained_app}.
  \end{proof}

  \medskip\noindent\textbf{Remark.}
  Proposition~\ref{prop:penalty_exact} shows that the DLO penalty
  relaxation~\eqref{eq:dlo} is asymptotically exact: driving
  $\lambda\to\infty$ recovers solutions of the original constrained
  latent-optimization problem~\eqref{eq:latent_opt_direct}.
  In the DLO algorithm, $\lambda$ is held fixed rather than taken to
  infinity, so the penalty term acts as a soft constraint whose strength
  is calibrated by a single hyperparameter.
  The constant-$\lambda$ strategy avoids the progressive ill-conditioning
  that plagues classical penalty schemes when the penalty parameter grows
  without bound~\cite{bertsekas1999nonlinear} and in practice a
  moderate value of $\lambda$ is sufficient to steer the physical inversion
  toward the learned velocity manifold while preserving the numerical
  stability of the alternating gradient updates.
  Proposition~\ref{prop:penalty_inexact} further confirms that the
  convergence behaviour is robust to the inexact subproblem solves
  inherent in the finite-step alternating optimization used by DLO.

%% file: content/experiments.tex
\section{Numerical Experiments}
\label{sec:experiments}

We validate DLO on the OpenFWI benchmark~\cite{deng2022openfwi}
(Section~\ref{sec:exp_openfwi}), examine its uncertainty quantification
behaviour induced by the stochasticity of the latent sampler
(Section~\ref{sec:exp_uq}), and finally test its out-of-distribution
generalization on the Marmousi~\cite{versteeg1994marmousi} and
Overthrust~\cite{aminzadeh19973} models (Section~\ref{sec:exp_large}).
Implementation details, including pretraining of the diffusion prior
(Appendix~\ref{sec:methods_diffusion}), forward solver of wave equation (Appendix~\ref{sec:app_solver}), the
hyperparameter settings (Appendix~\ref{sec:app_hyper}), the protocols used to replicate every baseline (Appendix~\ref{sec:app_baselines}), and the
computational cost analysis (Appendix~\ref{sec:app_algo}) are deferred to
the appendix.

\FloatBarrier
\subsection{Evaluation Metrics}
\label{sec:methods_metrics}

We assess reconstruction quality with three standard metrics
computed in the normalized $[-1,1]$ velocity domain: the root mean
square error (RMSE), the mean absolute error (MAE), and the
structural similarity index measure (SSIM).
Let $\{x_i\}_{i=1}^{N}$ and $\{\hat x_i\}_{i=1}^{N}$ denote the
ground-truth and recovered velocity models over the $N$-pixel
spatial domain.
RMSE and MAE measure point-wise discrepancies and are defined by
\begin{equation}
  \label{eq:rmse_mae}
  \mathrm{RMSE} = \sqrt{\frac{1}{N}\textstyle\sum_{i=1}^{N}(x_i-\hat x_i)^{2}},
  \qquad
  \mathrm{MAE}  = \frac{1}{N}\textstyle\sum_{i=1}^{N}\bigl|x_i-\hat x_i\bigr|,
\end{equation}
with RMSE emphasizing larger errors and MAE capturing average
deviations.
SSIM~\cite{wang2004image} compares structural and perceptual similarity
between $x$ and $\hat x$, taking values in $[0,1]$ where higher is
better,
\begin{equation}
  \label{eq:ssim}
  \mathrm{SSIM}(x,\hat x)
  = \frac{(2\mu_x\mu_{\hat x}+C_1)\,(2\sigma_{x\hat x}+C_2)}
  {(\mu_x^{2}+\mu_{\hat x}^{2}+C_1)\,(\sigma_x^{2}+\sigma_{\hat x}^{2}+C_2)},
\end{equation}
where $\mu$ and $\sigma^{2}$ denote local means and variances,
$\sigma_{x\hat x}$ the local covariance, and $C_1,C_2$ small
numerical-stability constants.
Local statistics are evaluated on an $11\!\times\!11$ sliding window
and averaged over the image to give the final score.
RMSE and MAE quantify overall intensity errors, whereas SSIM
captures perceptual and structural consistency, jointly providing
complementary views of reconstruction quality.

\FloatBarrier
\subsection{Validation on OpenFWI Datasets}
\label{sec:exp_openfwi}

OpenFWI~\cite{deng2022openfwi} is a collection of large-scale, multi-structural
benchmark datasets that provide paired velocity models and simulated seismic
recordings across a broad range of synthetic subsurface geologies, and it has
become the standard testbed for diffusion-based FWI.
We select four representative families from the benchmark whose velocity
fields cover complementary geological regimes:
FlatVel-B (FV-B), whose velocity fields consist of horizontally layered
structures with sharp velocity contrasts across flat interfaces;
FlatFault-B (FF-B), whose velocity fields contain horizontally layered
backgrounds disrupted by planar faults that introduce piecewise
discontinuities;
CurveVel-B (CV-B), whose velocity fields exhibit curved, dipping layers with
smooth lateral velocity variations;
and CurveFault-B (CF-B), whose velocity fields combine curved layered
structures with complex curved fault networks.
We compare DLO against the following baselines: standard FWI without
regularization, FWI with Tikhonov regularization~\cite{asnaashari2013regularized}, FWI with total
variation (TV)~\cite{esser2018total}, DiffusionFWI~\cite{wang2023prior}, and
RED-DiffEq~\cite{shan2026regularization}, the state-of-the-art among
diffusion-based FWI methods.

For each family we independently train a variance-preserving diffusion model
on its training set following DDPM~\cite{ho2020denoising}, and reuse the trained model
for every diffusion-based method evaluated on that family.
The network architecture, noise schedule, and training hyperparameters are
detailed in Section~\ref{sec:methods_diffusion}.
Fig.~\ref{fig:datasets_vs_generated} shows ground-truth velocity models
from each family alongside samples generated by the corresponding diffusion
prior, confirming that the learned generators reproduce the geological
characteristics of the four families.

\begin{figure}[!htbp]
  \centering
  \includegraphics[width=0.98\linewidth]{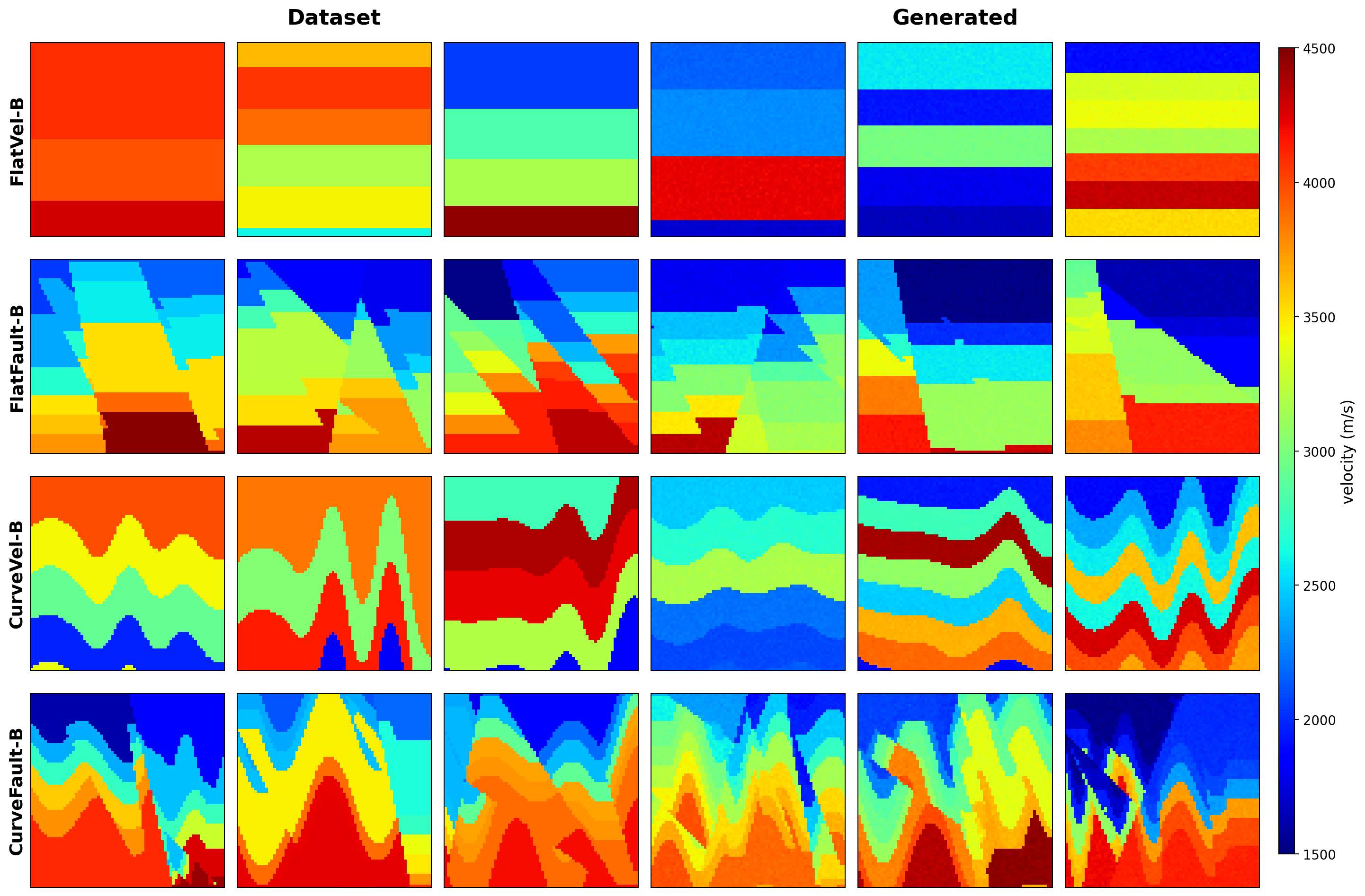}
  \caption{Comparison between velocity models from training set (left) and unconditionally generated samples from each pretrained diffusion model (right)}
  \label{fig:datasets_vs_generated}
\end{figure}

\FloatBarrier
\subsubsection{Clean Seismic Data}
\label{sec:exp_openfwi_clean}

We evaluate every method on $100$ velocity--waveform pairs drawn from the
test set of each family; all reported metrics (defined in
Section~\ref{sec:methods_metrics}) are averages over these $100$
samples within the corresponding subdataset.
All optimization-based methods are initialized from a Gaussian-smoothed
ground-truth velocity with standard deviation $\sigma=10$ and optimized for $300$
iterations for a fair comparison.
For DiffusionFWI~\cite{wang2023prior}, we follow its official repository
settings: the reverse process starts from the intermediate noise level
$t=100$ and performs $10$ FWI optimization steps between consecutive
diffusion steps.
For RED-DiffEq~\cite{shan2026regularization}, we adopt its original
hyperparameter settings and optimize for $300$ iterations.
We provide detailed settings of the numerical scheme of the acoustic
wave-equation forward solver in Appendix~\ref{sec:app_solver}, and the
hyperparameter settings of all these experiments are specifically
discussed in Appendix~\ref{sec:app_hyper}.

Fig.~\ref{fig:openfwi_clean} shows representative reconstructions
across the four families, while
Fig.~\ref{fig:openfwi_clean_bars} provides a quantitative comparison
of the metrics.
Unregularized FWI produces large errors with non-geological discontinuous
artifacts that violate the layered structure of the true models.
TV and Tikhonov regularization partially alleviate these instabilities,
but Tikhonov over-smooths the resulting reconstructions while TV preserves
sharp boundaries and performs quantitatively well on datasets whose
gradients are naturally sparse, such as FV-B, yet introduces visible
artifacts elsewhere, and neither of these classical regularizers are capable
of recovering the fine-scale detail of complex velocity models.

For diffusion-based methods, DiffusionFWI embeds the prior effectively on simple layered
structures, but lacks the capacity to capture complex faults and
fine-scale details; on the curved-boundary families it exhibits
visible artifacts, and its inversion quality is sensitive to the
smoothing hyperparameters in the sampling trajectory.
RED-DiffEq produces inversion results that are closer to the
ground truth with fewer artifacts, particularly on the CV-B and
CF-B families.
DLO further exhibits a more stable prior-preserving capability,
similarly captures discontinuous features such as faults, and more
accurately characterizes deep structures far from the surface; it
delivers the best metrics on CV-B, CF-B, and FF-B, and on FV-B it
ranks second
only to DiffusionFWI.

\begin{figure}[!htbp]
  \centering
  \includegraphics[width=1\linewidth]{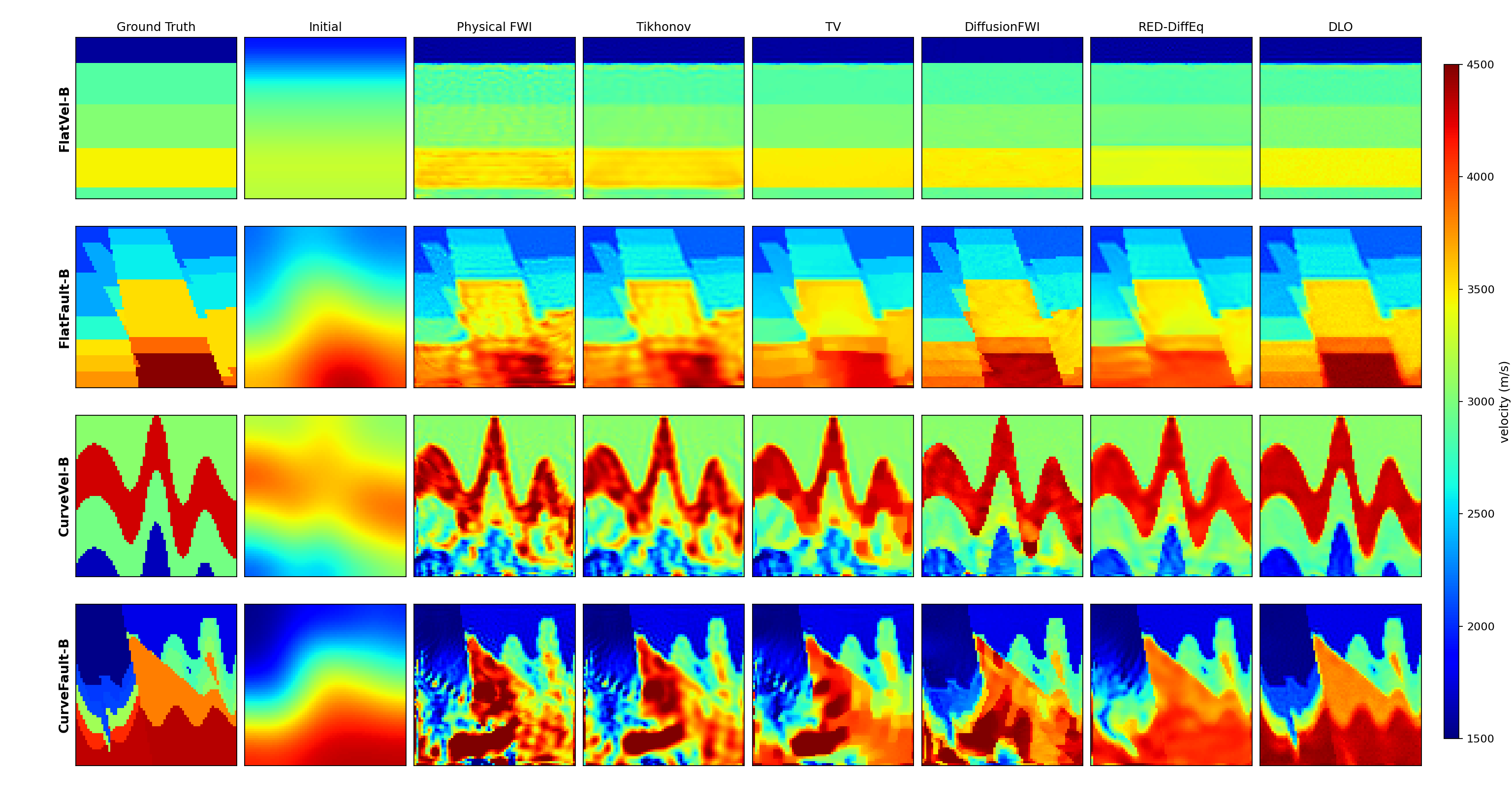}
  \caption{\textbf{OpenFWI clean-data qualitative comparison.}
    Representative reconstructions across CF-B, FV-B, FF-B, and CV-B for the
  ground truth, the smoothed initial model, and every baseline alongside DLO.}
  \label{fig:openfwi_clean}
\end{figure}

\begin{figure}[!htbp]
  \centering
  \includegraphics[width=\linewidth]{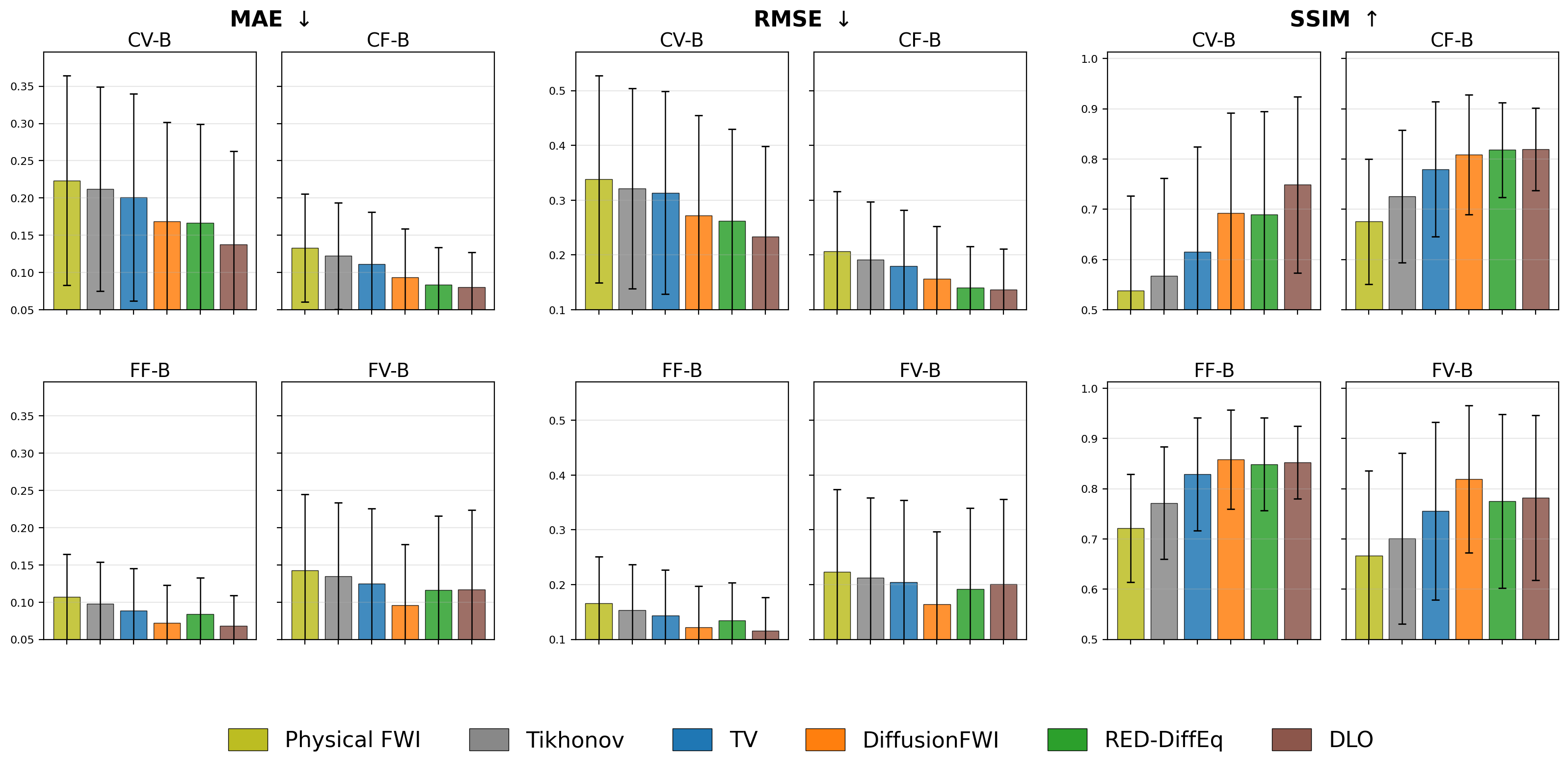}
  \caption{\textbf{OpenFWI clean-data quantitative comparison.}
    Normalized MAE, RMSE, and SSIM averaged over $100$ test samples per
    family. Within each metric block, the $2{\times}2$ grid arranges CV-B
    (top-left), CF-B (top-right), FF-B (bottom-left), and FV-B (bottom-right).
    Error bars indicate one standard deviation. Arrows next to each metric
    name indicate the preferred direction ($\downarrow$ lower is better,
  $\uparrow$ higher is better).}
  \label{fig:openfwi_clean_bars}
\end{figure}

\FloatBarrier
\subsubsection{Noisy Seismic Data}
\label{sec:exp_openfwi_noise}

We next assess robustness to additive measurement noise.
Following~\cite{shan2026regularization}, we perturb the seismic data with
independent Gaussian noise at three standard deviations,
$\sigma\!\in\!\{0.1,\,0.3,\,0.5\}$, corresponding to average per-sample
signal-to-noise ratios of $24.2$~dB, $14.6$~dB, and $10.2$~dB across the $400$
test samples.
All methods share the same hyperparameter settings and initialization
as in the clean-data scenario.

The comparison results of the Gaussian-noise corrupted scenario are
summarized in Fig.~\ref{fig:openfwi_noise}.
Unregularized FWI and Tikhonov-regularized FWI degrade sharply with
increasing $\sigma$, developing pronounced non-physical artifacts in regions
of low data sensitivity.
TV regularization substantially suppresses these artifacts but introduces a
clearly visible staircase pattern, particularly on the smooth-gradient
families, so the trade-off between artifact suppression and structural
fidelity is unfavorable.

Among the diffusion-based methods, RED-DiffEq and DLO are markedly robust
to noise: the learned prior suppresses noise-induced fluctuations that lie
outside the geological manifold, and their metrics degrade only mildly even
at $\sigma\!=\!0.5$.
DiffusionFWI, in contrast, performs competitively under clean data and low
noise, but as the noise level increases its reconstructions develop
large-area artifacts and non-prior features that deviate from the
geological structures learned by the diffusion model.
DLO consistently achieves the best inversion metrics across the vast
majority of experimental scenarios and noise levels.
Under noise-corrupted conditions, DLO maintains its data-prior-driven
inversion paradigm, demonstrating faithful recovery of fine-scale
velocity details and accurate reconstruction of layered and complex
boundary structures.

\begin{figure}[!htbp]
  \centering
  \begin{minipage}{\linewidth}\centering
    (\textbf{a}) \\[0.2em]
    \includegraphics[width=\linewidth]{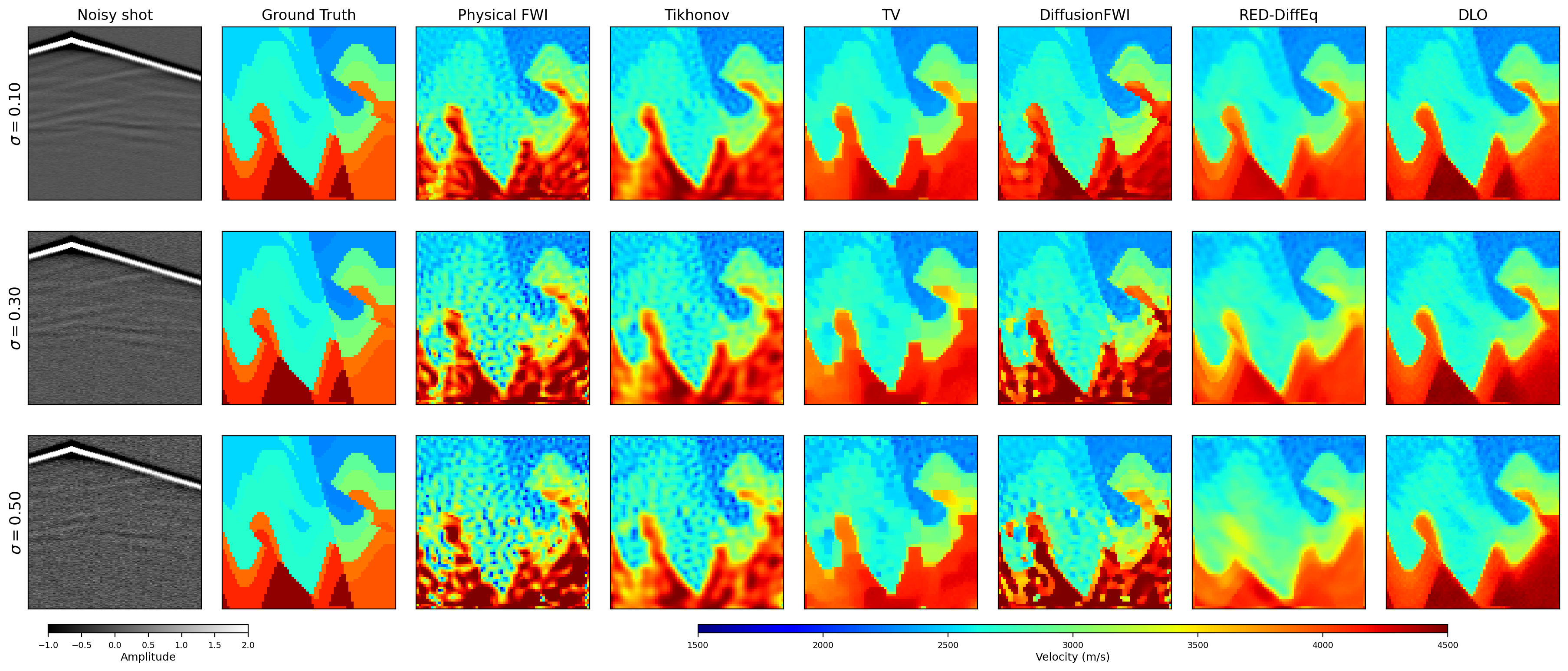}
  \end{minipage}\\[0.6em]
  \begin{minipage}{\linewidth}\centering
    (\textbf{b}) \\[0.2em]
    \includegraphics[width=\linewidth]{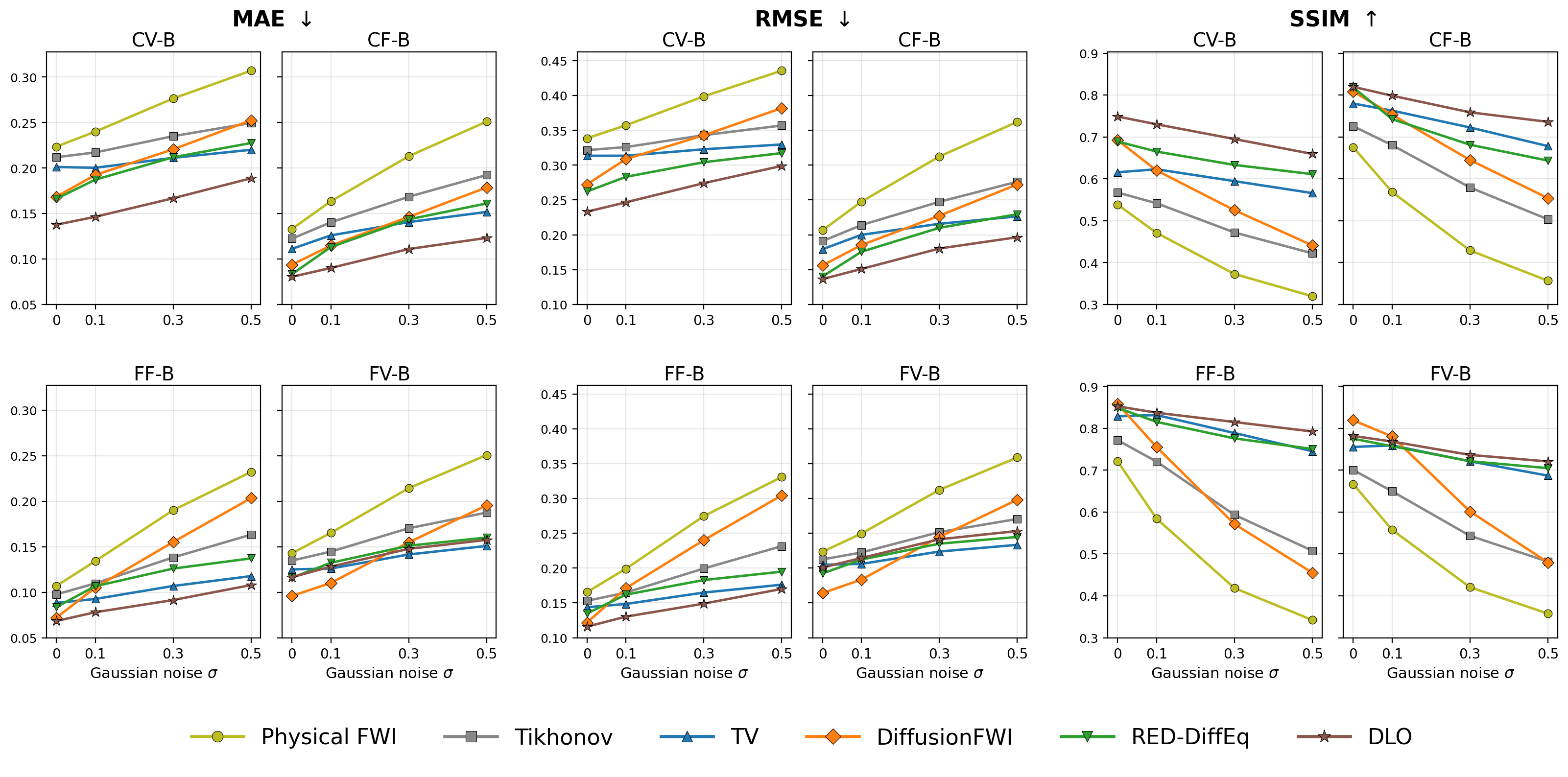}
  \end{minipage}
  \caption{\textbf{OpenFWI noisy-data results (Gaussian noise).}
    (\textbf{a}) Representative single-sample reconstructions on CF-B at three
    noise levels ($\sigma\!=\!0.10,\,0.30,\,0.50$); the leftmost column shows
    the noisy shot gather, the second column the ground-truth velocity, and the
    remaining columns the reconstructions from each method.
    (\textbf{b}) Normalized MAE, RMSE, and SSIM as a function of the Gaussian
  noise standard deviation, averaged over $100$ test samples per family.}
  \label{fig:openfwi_noise}
\end{figure}

\FloatBarrier
\subsubsection{Seismic Data with Missing Traces}
\label{sec:exp_openfwi_missing}

We finally evaluate the methods under incomplete acquisitions in which a
fixed subset of receivers is removed from every shot gather.
We consider three settings corresponding to \emph{slight}, \emph{half}, and
\emph{most-missing} acquisitions: $10$, $35$, and $60$ of the $70$ traces
are removed, which discards $14.3\%$, $50.0\%$, and $85.7\%$ of the
available recordings, respectively; the missing receiver indices are kept
identical across shots to mimic a realistic sensor-failure scenario.

Performance under missing traces is reported in
Fig.~\ref{fig:openfwi_missing}.
Almost every method degrades monotonically as the fraction of missing traces
grows, but the diffusion-based methods are noticeably more robust than the
non-prior baselines.
RED-DiffEq and DLO retain a coherent prior-consistent structure in the
slight- and half-missing regimes, and even in the most-missing setting they
avoid the conspicuous artifacts seen in the other methods while preserving
the dominant geological features.
Standard FWI and the classical-regularization baselines, in contrast, exhibit
pronounced artifacts and a global loss of fine-scale detail across the
velocity field as soon as the acquisition is meaningfully sparsified.
Taken together with the noisy-data experiments of
Section~\ref{sec:exp_openfwi_noise}, these results demonstrate that DLO
remains effective on the more realistic acquisitions encountered in
practice, including both heavily sparsified and noise-contaminated
recordings.

\begin{figure}[!htbp]
  \centering
  \begin{minipage}{\linewidth}\centering
    (\textbf{a}) \\[0.2em]
    \includegraphics[width=\linewidth]{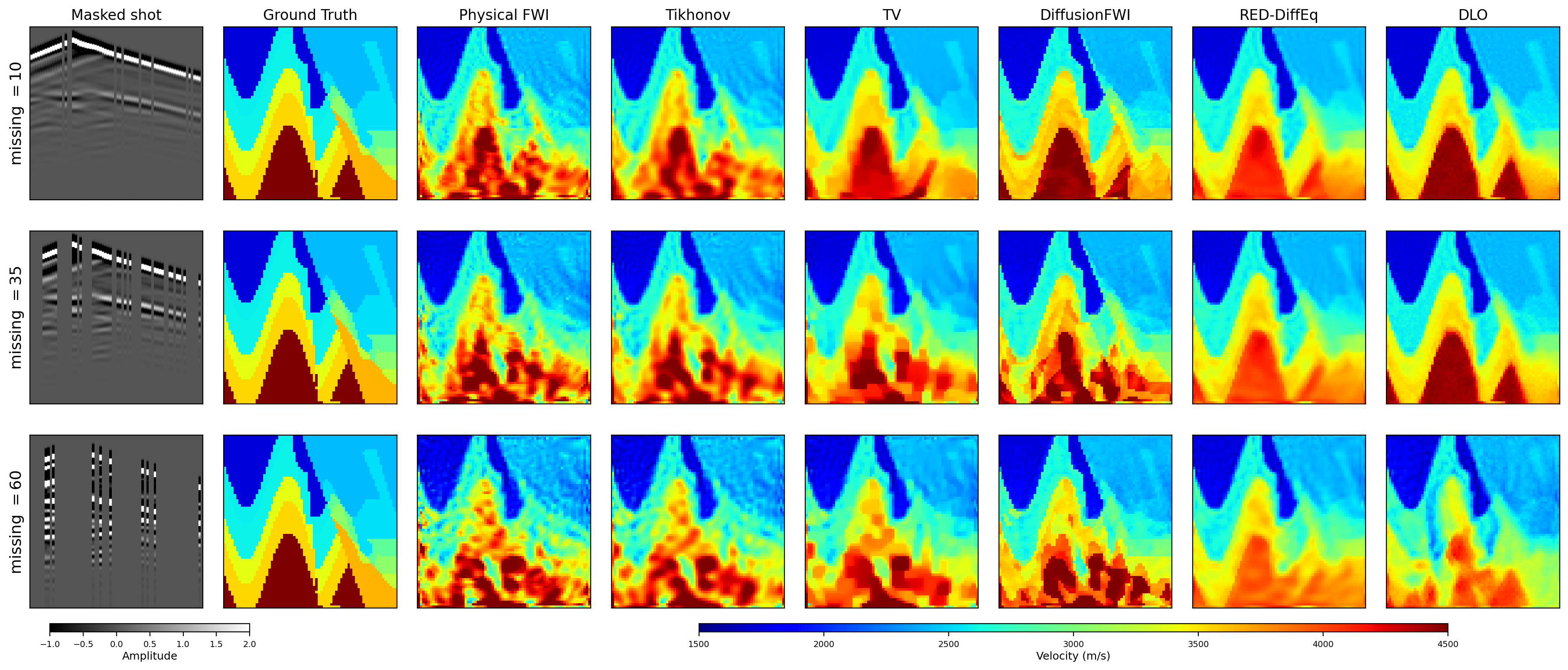}
  \end{minipage}\\[0.6em]
  \begin{minipage}{\linewidth}\centering
    (\textbf{b}) \\[0.2em]
    \includegraphics[width=\linewidth]{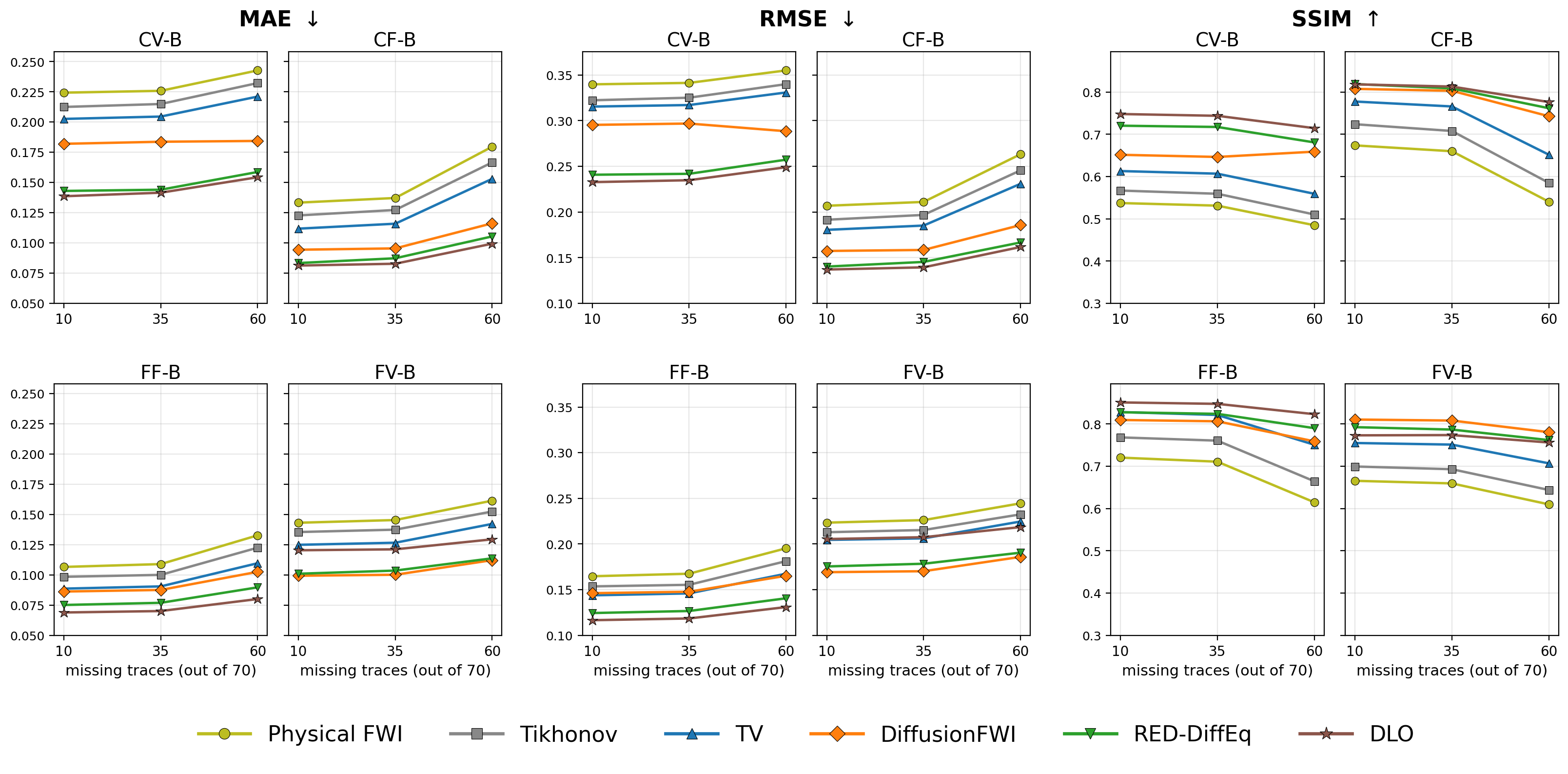}
  \end{minipage}
  \caption{\textbf{OpenFWI missing-trace results.}
    (\textbf{a}) Representative single-sample reconstructions on CF-B with
    $10$, $35$, and $60$ of the $70$ receivers removed; the leftmost column
    shows the masked shot gather, the second column the ground-truth velocity,
    and the remaining columns the reconstructions from each method.
    (\textbf{b}) Normalized MAE, RMSE, and SSIM as a function of the number of
  missing traces, averaged over $100$ test samples per family.}
  \label{fig:openfwi_missing}
\end{figure}

\subsection{Sensitivity to Latent Initialization}
\label{sec:exp_uq}

DLO is a deterministic method whose optimization result depends only on the
choice of hyperparameters and the initialization of the physical velocity
$v$ and the latent variable $z$.
To evaluate the sensitivity of DLO to the random initialization of the latent
variable, we conduct an experiment on the OpenFWI test set.
For each sample, we generate $20$ random initializations of
$z^{(0)} \sim \mathcal{N}(0,I)$; we take the ensemble mean over
the $20$ runs as the final prediction and report the standard deviation across
runs as a measure of initialization-induced uncertainty.

\begin{figure}[!htbp]
  \centering
  \includegraphics[width=0.98\linewidth]{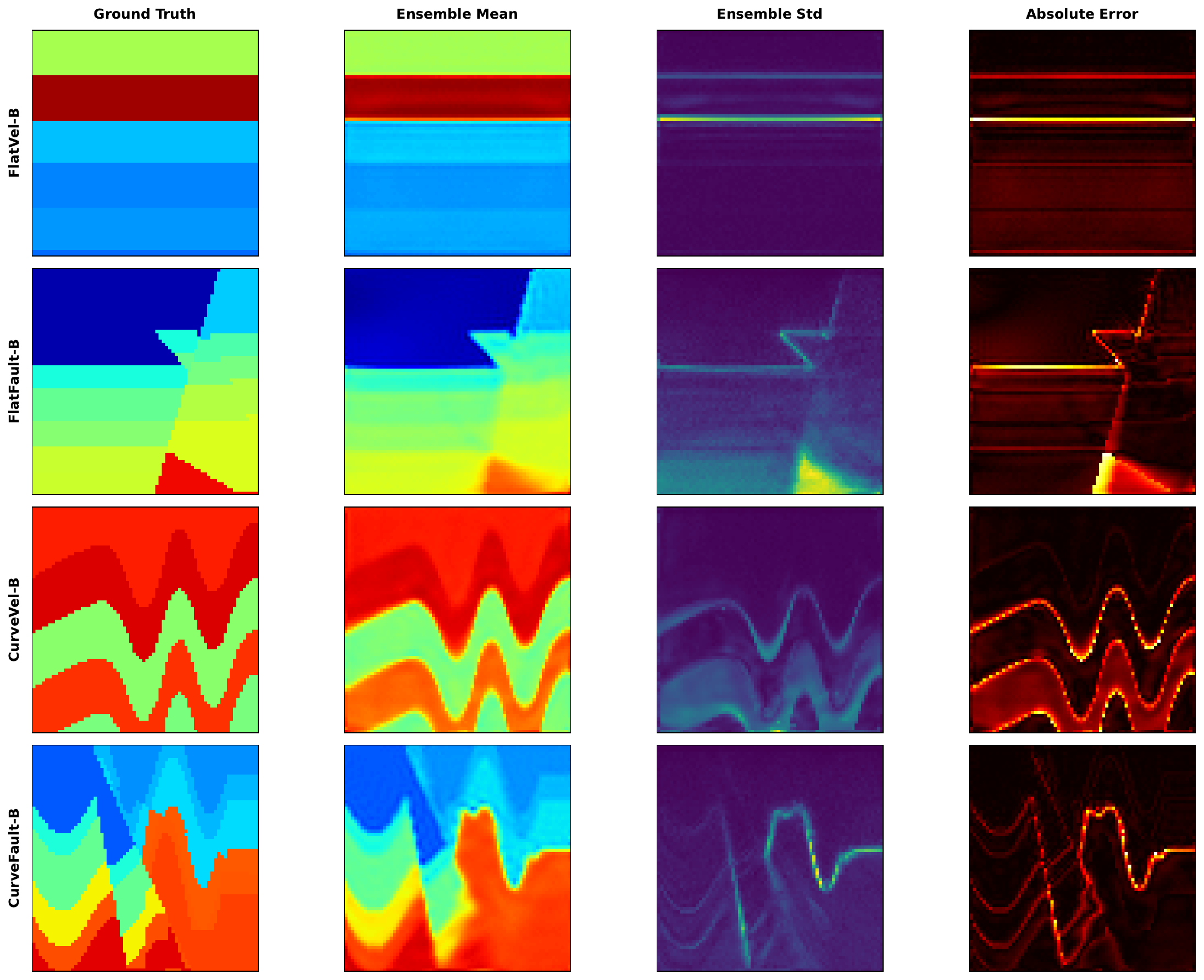}
  \caption{\textbf{Sensitivity of DLO to the latent initialization $z^{(0)}$.}
    Each row shows one representative sample per OpenFWI dataset.
    Columns (left to right): ground truth; ensemble mean over $20$ independent
    $z^{(0)}$ draws; per-pixel ensemble standard deviation;
    absolute error between ensemble mean and ground truth.
    Uncertainty is localized at geological boundaries, while the bulk velocity
  structure is robustly recovered across all initializations.}
  \label{fig:sensitivity}
\end{figure}

Fig.~\ref{fig:sensitivity} indicates that the per-pixel ensemble standard
deviation is primarily concentrated along fault interfaces and geological
discontinuities, as well as in regions where the absolute error is larger,
such as deeper layers with limited seismic illumination.
The spatial correspondence between the standard deviation and the absolute
error confirms that the initialization-induced uncertainty is positively
correlated with the inversion error.
To quantitatively assess the correspondence between initialization-induced
uncertainty and inversion error, we compute the per-pixel Spearman and Pearson
correlation coefficients between the ensemble standard deviation and the
absolute error of the ensemble mean, separately for each of the $40$ test
samples.
Across all samples, the mean Spearman rank correlation is $0.6454$ and the
mean Pearson correlation is $0.6282$, both significantly
positive ($p < 10^{-26}$, one-sample $t$-test against zero).
All $40$ samples yield positive correlations, confirming that the
initialization-induced uncertainty and the reconstruction error are
robustly co-oriented: pixels where the ensemble varies more across
independent runs are also pixels where the prediction tends to deviate more
from the ground truth.

\begin{figure}[!htbp]
  \centering
  \includegraphics[width=0.92\linewidth]{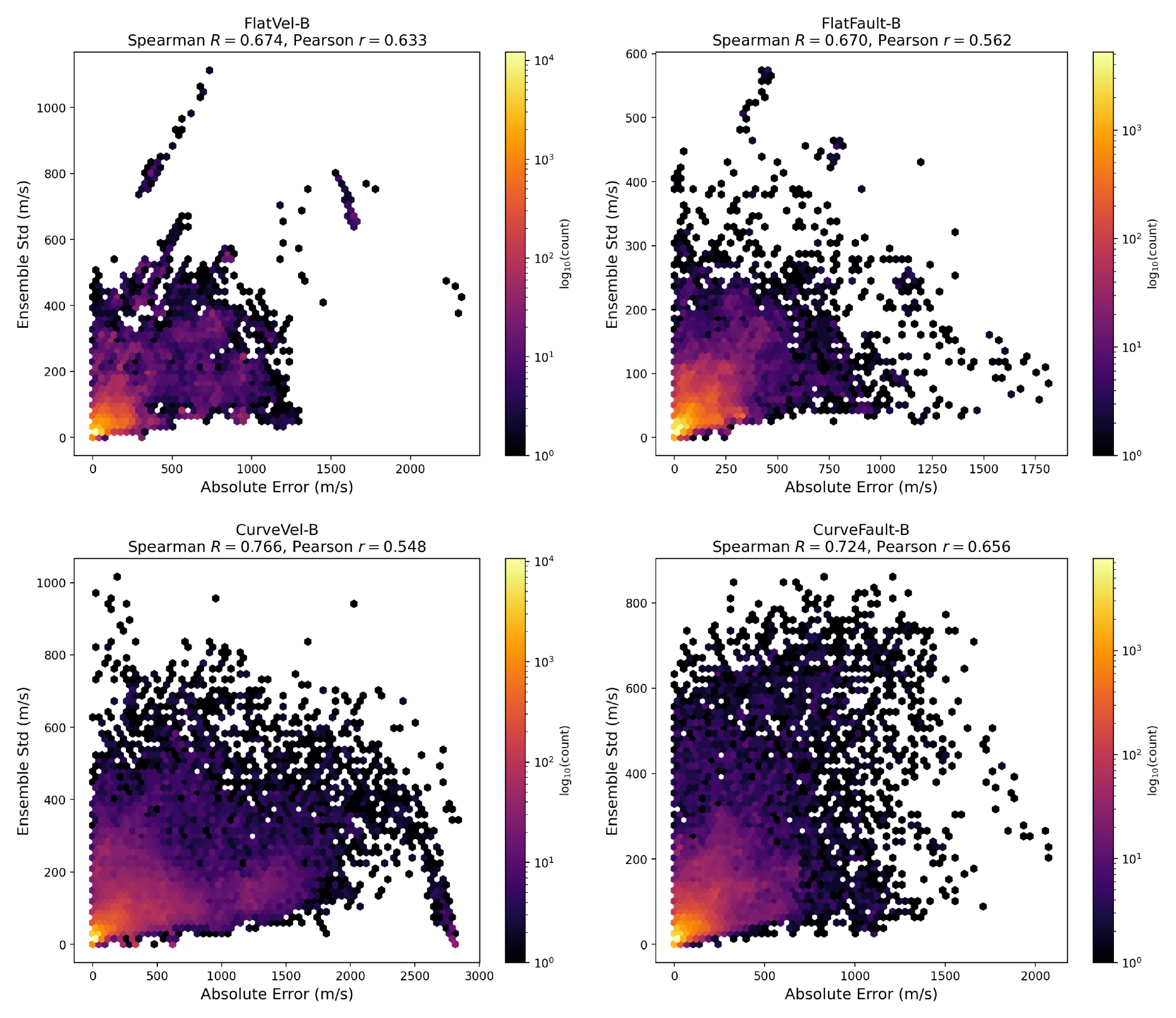}
  \caption{\textbf{Pixel-level ensemble standard deviation versus absolute
    error, aggregated over all test samples per dataset.}
    The high-density region exhibits a clear positive trend across all four
    geological families, further corroborating the per-sample correlation
  results.}
  \label{fig:pixel_std_vs_error}
\end{figure}

Fig.~\ref{fig:pixel_std_vs_error} shows the pixel-level joint distribution of
ensemble standard deviation and absolute error for each dataset.
The high-density region consistently follows a positive trend, independently
confirming that larger initialization-induced uncertainty reliably indicates
larger reconstruction error at the pixel level.
Across all $40$ test samples, the mean within-sample standard deviation of
RMSE, MAE, and SSIM are $0.0218$, $0.0146$, and $0.0285$, respectively.
These results confirm that DLO yields consistent reconstructions across
different initializations of $z^{(0)}$, while the positively correlated
ensemble uncertainty provides a reliable indicator of local reconstruction
quality that requires no additional computation beyond the ensemble itself.

\FloatBarrier
\subsection{Validation on the Marmousi and Overthrust Benchmarks}
\label{sec:exp_large}

To probe generalization beyond the OpenFWI training distribution, we extend
the experiments to the Marmousi~\cite{versteeg1994marmousi} and
Overthrust~\cite{aminzadeh19973} velocity models.
Marmousi is a 2D synthetic acoustic model built after the Cuanza basin
(Angola), with strongly dipping curved layers cut by a dense fault network;
the SEG/EAGE Overthrust model represents a thrust-belt setting with
overthrust sheets and high-velocity carbonate units beneath an erosional
surface.
Both targets differ markedly from the OpenFWI prior families, so these
experiments test whether our method extends to substantially more complex
geology and whether out-of-distribution inversion is feasible with a
diffusion prior reused as-is.

Representative reconstructions at $\sigma_\mathrm{init}\!=\!20$ on clean data
are shown in Figs.~\ref{fig:marmousi_panel} and~\ref{fig:overthrust_panel},
and the corresponding metrics are the clean column of
Figs.~\ref{fig:marmousi_sweep} and~\ref{fig:overthrust_sweep}.
DLO faithfully recovers the geological character of both targets, restoring
fault structures, dipping interfaces, and the discontinuous stratigraphy
that the classical regularizers blur away, and it attains the best MAE,
RMSE, and SSIM among all methods on both benchmarks.
RED-DiffEq substantially outperforms the classical baselines, confirming
that the patched diffusion regularizer transfers meaningfully out of
distribution, while DiffusionFWI lies between RED-DiffEq and the classical
methods.
We further test the dependence on the smoothed initial model by sweeping
$\sigma_\mathrm{init}\!\in\!\{24,28,32\}$ on noise-free data, and DLO
exhibits clear insensitivity to the initialization, retaining the best
metrics across every $\sigma_\mathrm{init}$ on both Marmousi and Overthrust.

We next assess robustness to additive Gaussian measurement noise by
perturbing the seismic data with $\sigma\!\in\!\{0.1,0.3,0.5\}$ at
$\sigma_\mathrm{init}\!=\!20$ without per-condition retuning
(Figs.~\ref{fig:marmousi_sweep} and~\ref{fig:overthrust_sweep}).
The classical baselines degrade markedly with increasing noise, developing
large-amplitude artifacts in regions of low data sensitivity.
DiffusionFWI lies between the classical baselines and the prior-regularized
methods on both benchmarks.
DLO and RED-DiffEq both display strong noise robustness, with metric curves
that stay nearly flat across the full noise range; DLO essentially
maintains the same MAE, RMSE, and SSIM level as RED-DiffEq as $\sigma$
grows, while retaining its advantage at the lower noise levels.
Taken together, these experiments demonstrate that DLO generalizes well
across diverse geological scenarios and preserves high-fidelity
reconstructions under the kinds of data corruption encountered in
practice, underscoring its potential for real-world seismic imaging
applications.

\begin{figure}[!htbp]
  \centering
  \includegraphics[width=0.98\linewidth]{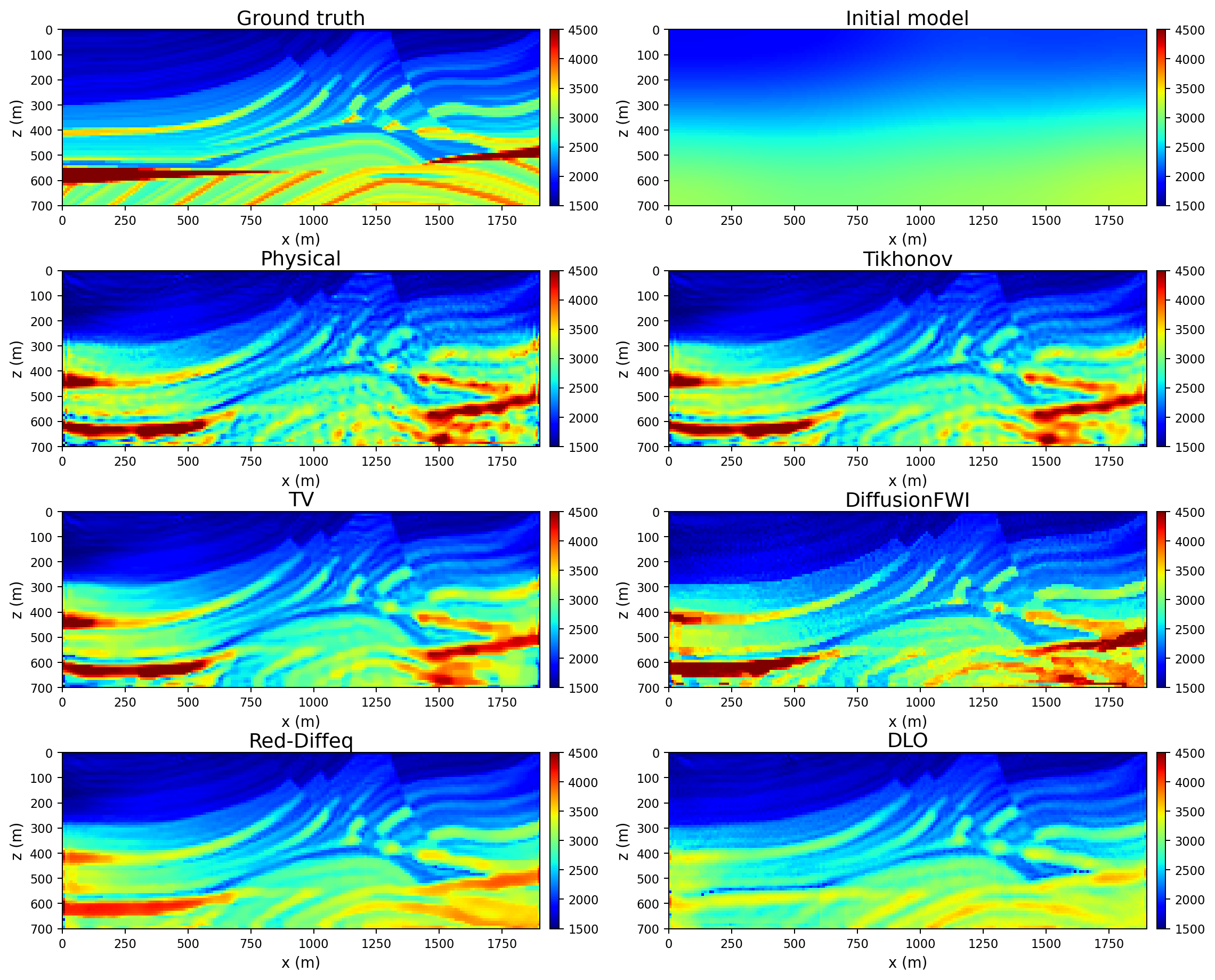}
  \caption{\textbf{Marmousi qualitative comparison
    ($\sigma_\mathrm{init}\!=\!20$, noise-free).}
    Ground truth, smoothed initial model, and reconstructions of every
    method on the $70\times190$ Marmousi benchmark. All diffusion-based
    methods reuse the CurveFault-B prior trained on OpenFWI; DLO and
    RED-DiffEq operate on three overlapping $70\times70$ patches and
    DiffusionFWI uses a sliding-window adaptation, while the classical
    baselines act on the full rectangular field.}
  \label{fig:marmousi_panel}
\end{figure}

\begin{figure}[!htbp]
  \centering
  \includegraphics[width=0.98\linewidth]{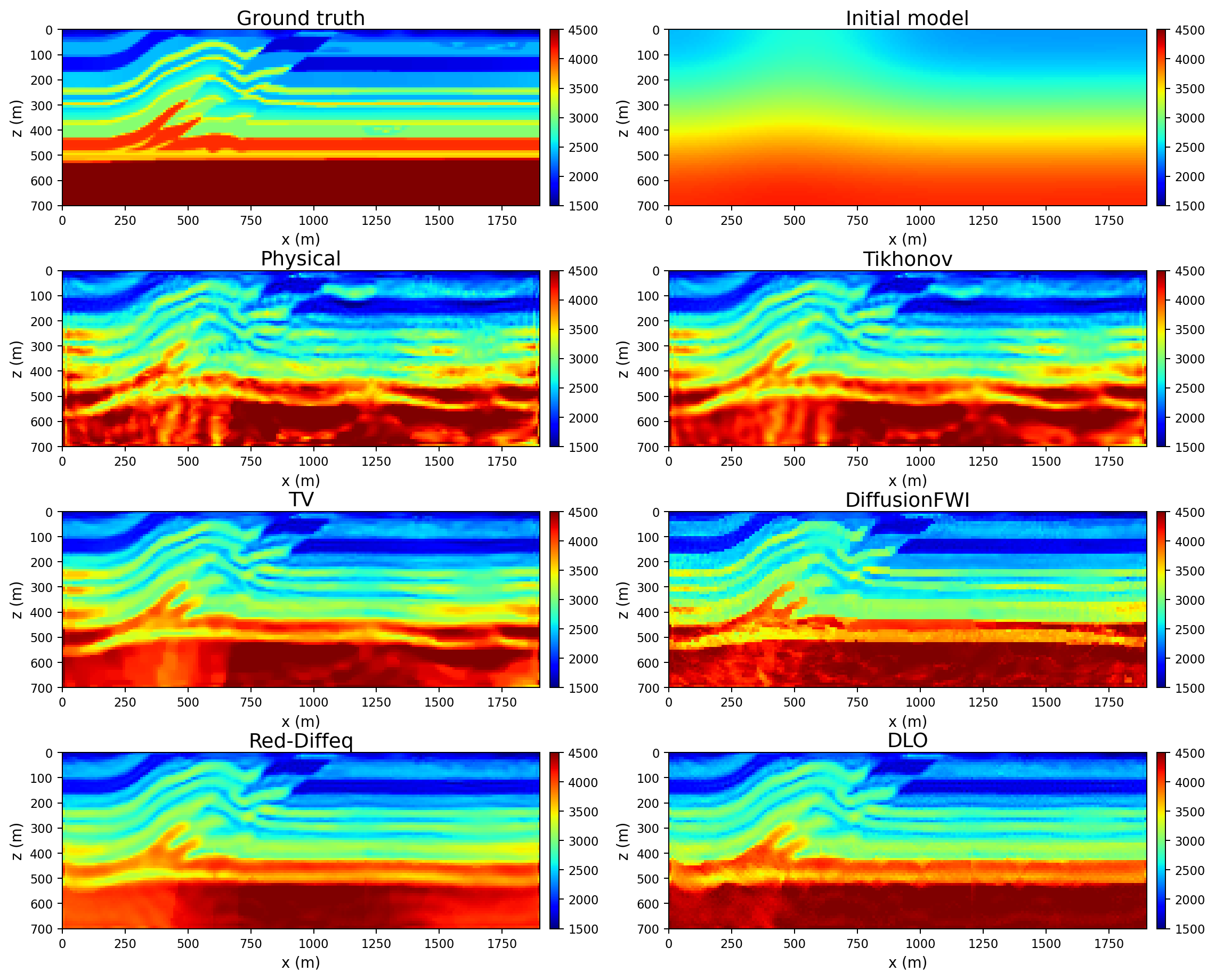}
  \caption{\textbf{Overthrust qualitative comparison
    ($\sigma_\mathrm{init}\!=\!20$, noise-free).}
    Ground truth, smoothed initial model, and reconstructions of every
    method on the $70\times190$ Overthrust benchmark, under the same
    setup as Fig.~\ref{fig:marmousi_panel}.}
  \label{fig:overthrust_panel}
\end{figure}

\begin{figure}[!htbp]
  \centering
  \includegraphics[width=0.98\linewidth]{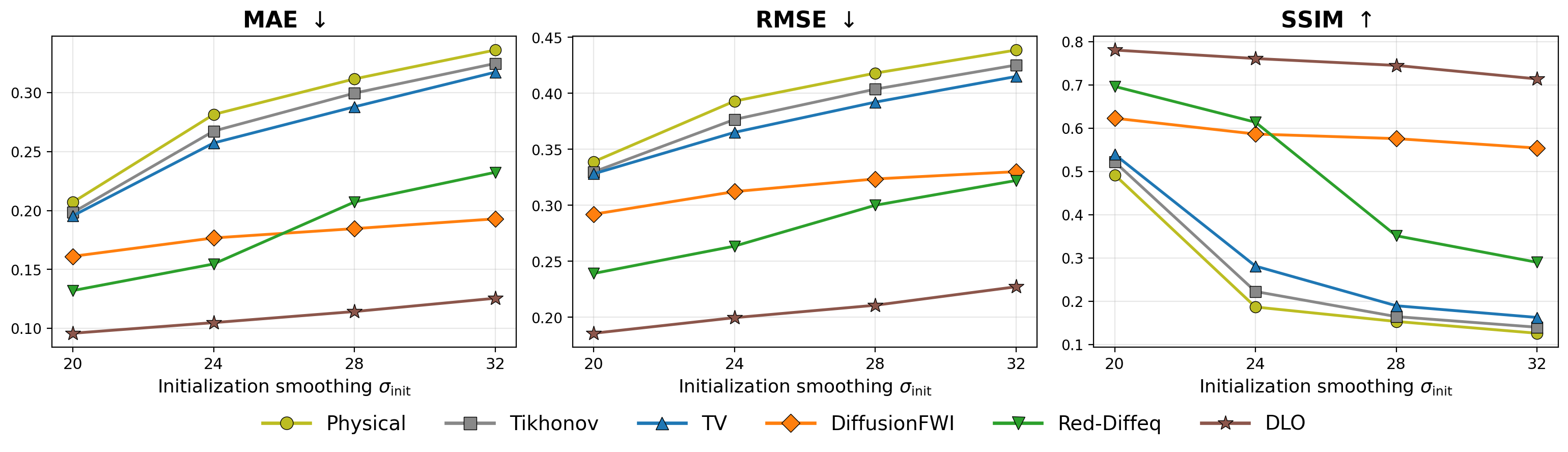}\\[0.4em]
  \includegraphics[width=0.98\linewidth]{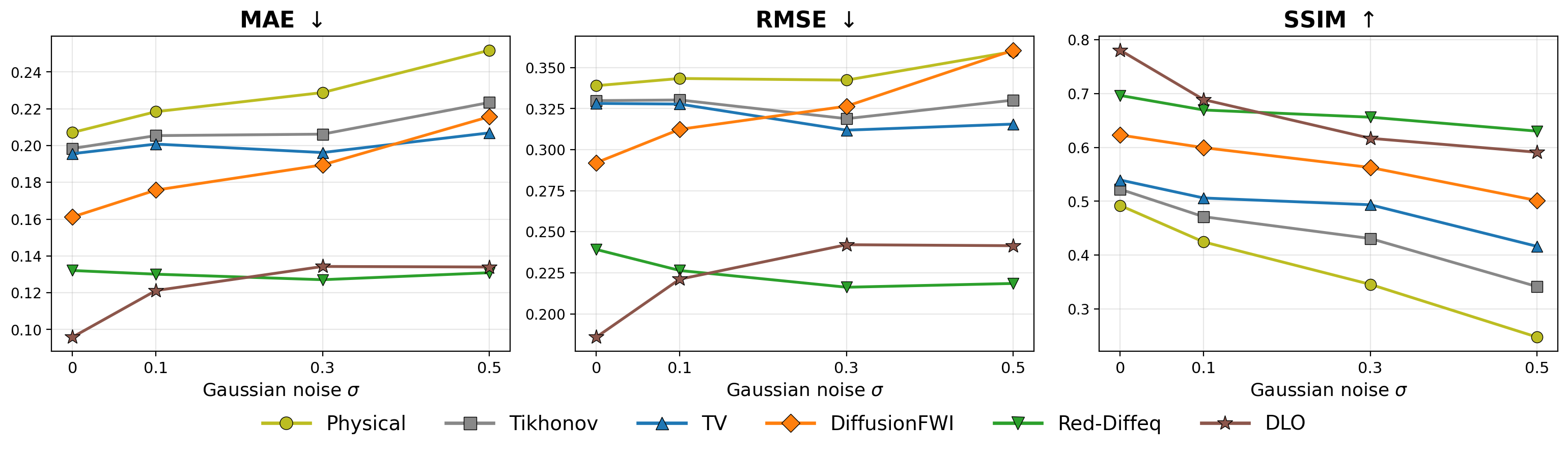}
  \caption{\textbf{Marmousi robustness sweep.}
    Normalized MAE, RMSE, and SSIM as a function of the
    initialization-smoothing kernel
    $\sigma_\mathrm{init}\!\in\!\{20,24,28,32\}$ on noise-free data
    (top) and as a function of the additive Gaussian-noise standard
    deviation $\sigma\!\in\!\{0,0.1,0.3,0.5\}$ at
    $\sigma_\mathrm{init}\!=\!20$ (bottom).}
  \label{fig:marmousi_sweep}
\end{figure}

\begin{figure}[!htbp]
  \centering
  \includegraphics[width=0.98\linewidth]{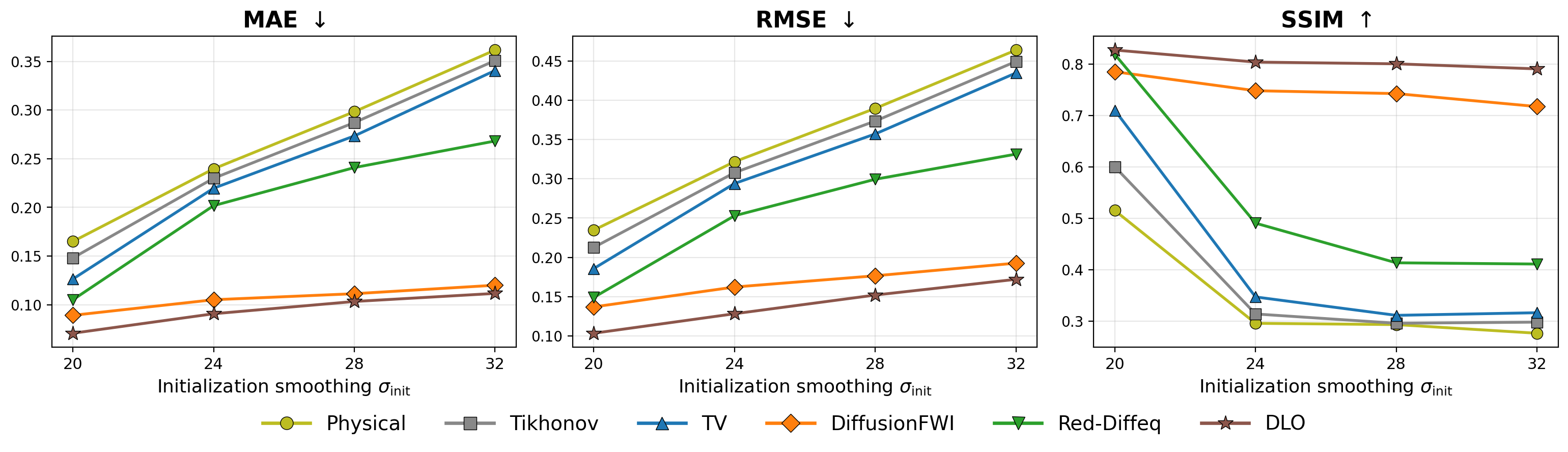}\\[0.4em]
  \includegraphics[width=0.98\linewidth]{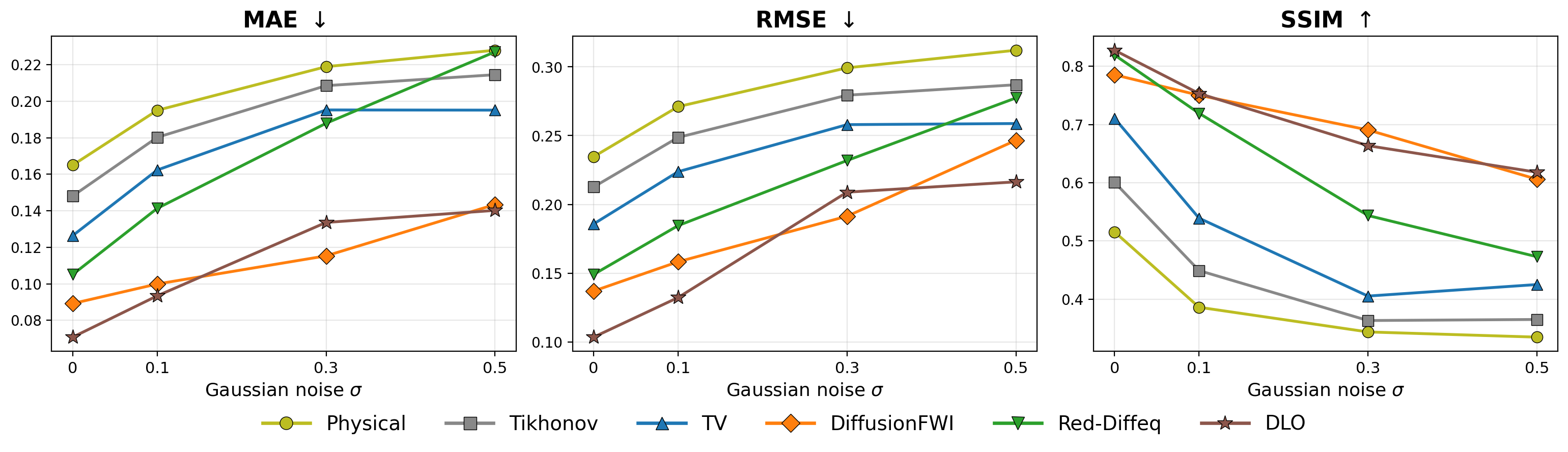}
  \caption{\textbf{Overthrust robustness sweep.}
    Same sweep as Fig.~\ref{fig:marmousi_sweep}, for the Overthrust
    benchmark.}
  \label{fig:overthrust_sweep}
\end{figure}

%% file: content/conclusion.tex
\section{Conclusion}
\label{sec:conclusion}

We proposed Decoupled Latent Optimization (DLO), a framework for embedding
pretrained diffusion priors into PDE-governed inverse problems.
DLO relaxes the standard latent-optimization formulation into a
quadratic-penalty objective that decouples the physical inversion from
the diffusion sampler. This decoupling isolates the data-fidelity
gradient in physical space, removes the need to backpropagate through
the composed PDE-solver--sampler chain, and permits conventional
physical-space initialization.

On the OpenFWI benchmark, DLO consistently outperforms classical
regularizers and existing diffusion-based methods under clean, noisy,
and missing-trace acquisitions. The prior, trained solely on the
$70\times70$ OpenFWI velocity models, transfers without modification to
the substantially larger Marmousi and Overthrust benchmarks, where DLO
recovers intricate faults and dipping layers and remains robust to
initialization smoothing and measurement noise.

The decoupled latent-optimization principle applies broadly to
PDE-governed inverse problems for which a pretrained generative prior
of the unknown field is available. Extending the framework to
three-dimensional geometries and to priors that capture richer
geological complexity are natural directions for future work.

%% file: content/appendix.tex
\appendix

\section{Diffusion Model Preliminaries}
\label{sec:bg_diffusion}

The essential definitions of the variance-preserving marginal, the denoiser
$D_\theta$, and the deterministic DDIM sampler $R_\theta$ are summarized in
Section~\ref{sec:diffusion_prelim} of the main text.
This appendix provides the complete mathematical derivations---the stochastic
differential equation, the probability-flow ODE, Tweedie's identity, and the
DDPM training objective---for reference.

A diffusion model defines a continuous family of Gaussian-corrupted marginals
$\{p_t\}_{t\in[0,T]}$ obtained by progressively adding noise to clean samples
$x_0\sim p_\mathrm{data}$.
Under the variance-preserving (VP) corruption~\cite{ho2020denoising,song2020score},
\begin{equation}
  \label{eq:vp_marginal_app}
  x_t = \sqrt{\bar\alpha_t}\,x_0 + \sqrt{1-\bar\alpha_t}\,\epsilon,
  \qquad \epsilon\sim\mathcal{N}(0,I),
\end{equation}
where $\bar\alpha_t\in(0,1]$ decreases monotonically from $\bar\alpha_0\!=\!1$ to
$\bar\alpha_T\!\approx\!0$, so $p_T\approx\mathcal{N}(0,I)$.
Conditional on $x_0$, $x_t$ is Gaussian, so the marginal density of
$x_t$ is the convolution of $p_\mathrm{data}$ with a rescaled Gaussian
kernel,
\begin{equation}
  \label{eq:vp_marginal_density}
  p_t(x)
  = \int p_\mathrm{data}(x_0)\,
  \mathcal{N}\!\bigl(x;\sqrt{\bar\alpha_t}\,x_0,\,(1-\bar\alpha_t)\,I\bigr)\,
  \mathrm{d}x_0.
\end{equation}
Viewing $\bar\alpha_t$ as a smooth function of $t$, the family $p_t(x)$
satisfies a drift--diffusion equation in $x$ and $t$,
\begin{equation}
  \label{eq:vp_fp}
  \partial_t p_t(x)
  = \tfrac{1}{2}\beta(t)\bigl[\nabla\!\cdot\!\bigl(x\,p_t(x)\bigr)
  + \nabla^2 p_t(x)\bigr],
  \qquad
  \beta(t) = -\frac{\dot{\bar\alpha}_t}{\bar\alpha_t}.
\end{equation}
The forward diffusion that induces these marginals is the It\^o SDE with drift
$-\tfrac{1}{2}\beta(t)\,x$ and diffusion coefficient $\sqrt{\beta(t)}$; sampling
reverses this process, and by Anderson's theorem~\cite{ANDERSON1982313} the
reverse-time dynamics that transport the standard Gaussian $p_T$ back to the
data distribution $p_\mathrm{data}$ are governed by the sampling SDE
\begin{equation}
  \label{eq:vp_sde}
  \mathrm{d}x
  = \Bigl[-\tfrac{1}{2}\beta(t)\,x
  - \beta(t)\,\nabla_{\!x}\log p_t(x)\Bigr]\mathrm{d}t
  + \sqrt{\beta(t)}\,\mathrm{d}\bar w,
\end{equation}
integrated backwards from $t=T$ to $t=0$, with $\bar w$ a reverse-time Wiener
process. The Fokker--Planck equation of this sampling SDE yields exactly the
same form as Eq.~\eqref{eq:vp_fp}, so it traverses the
marginals~\eqref{eq:vp_marginal_density} in reverse. The same marginals are
reproduced by the equivalent deterministic probability-flow ordinary
differential equation~\cite{song2020score}
\begin{equation}
  \label{eq:vp_ode}
  \mathrm{d}x
  = \Bigl[-\tfrac{1}{2}\beta(t)\,x
  - \tfrac{1}{2}\beta(t)\,\nabla_{\!x}\log p_t(x)\Bigr]\mathrm{d}t,
\end{equation}
which provides a deterministic sampler sharing the same marginals.
Both the SDE and the ODE require the score $\nabla_{\!x}\log p_t(x)$
of every noisy marginal, which by Tweedie's identity~\cite{efron2011tweedie}
admits the conditional-expectation form
\begin{equation}
  \label{eq:score_tweedie}
  \nabla_{\!x_t}\log p_t(x_t)
  = \frac{\sqrt{\bar\alpha_t}\,\mathbb{E}[x_0\!\mid\!x_t] - x_t}{1-\bar\alpha_t}.
\end{equation}
The conditional expectation $\mathbb{E}[x_0\!\mid\!x_t]$ is approximated
by a neural denoiser $D_\theta(x_t,t)$ trained by the standard denoising
objective
\begin{equation}
  \label{eq:ddpm_loss}
  \min_{\theta}\;
  \mathbb{E}_{\,x_0\sim p_\mathrm{data},\;
    t\sim\mathcal{U}\!(0,T),\;
  \epsilon\sim\mathcal{N}(0,I)}
  \bigl[\,\omega(t)\,\|D_\theta(x_t,t) - x_0\|^2\,\bigr],
  \qquad
  x_t = \sqrt{\bar\alpha_t}\,x_0 + \sqrt{1-\bar\alpha_t}\,\epsilon,
\end{equation}
which, substituted into Eq.~\eqref{eq:score_tweedie}, supplies a learned
score for the SDE~\eqref{eq:vp_sde} and the ODE~\eqref{eq:vp_ode}.
We adopt the deterministic DDIM update~\cite{song2020denoising}, a discretization of
the probability-flow ODE~\eqref{eq:vp_ode},
\begin{equation}
  \label{eq:ddim_update}
  x_{t-1} = \sqrt{\bar\alpha_{t-1}}\,D_\theta(x_t,t)
  + \sqrt{1-\bar\alpha_{t-1}}\,
  \frac{x_t-\sqrt{\bar\alpha_t}\,D_\theta(x_t,t)}{\sqrt{1-\bar\alpha_t}},
\end{equation}
which, iterated from $x_T=z\sim\mathcal{N}(0,I)$ down to $x_0$, defines a
deterministic sampler $R_\theta\!:\!z\!\mapsto\!x_0$ that transports the
standard Gaussian to (an approximation of) $p_\mathrm{data}$.
We treat $R_\theta$ as the learned generator throughout the paper.

\section{Implementation Details}
\label{sec:methods_bg}

\subsection{Diffusion Model Pretraining}
\label{sec:methods_diffusion}

For each of the four OpenFWI families (FV-B, FF-B, CV-B, CF-B) we
independently pretrain a velocity-domain diffusion model on its
training split.
All four families share the same network architecture, noise schedule,
and training protocol; the only difference between runs is the training
data.
Velocity fields are normalized by
$y = (v_{\,\mathrm{m/s}} - 3000)/1500$, so the physical range
$[1500,4500]$\,m/s maps to $[-1,1]$.
The native velocity resolution is $70\!\times\!70$; inside the network
we apply a small amount of reflection padding to $72\!\times\!72$ at
the input and slice the output back to $70\!\times\!70$, so the
denoiser operates on a resolution that is cleanly divisible at every
downsampling stage while the externally visible velocity field
remains $70\!\times\!70$ throughout pretraining and inversion.

\paragraph{Network architecture.}
The denoiser $\epsilon_\theta$ is a $2$D U-Net (the Hugging Face
Diffusers \texttt{UNet2DModel}) with one input and one output channel.
It comprises four resolution stages with output channel widths
$\{128,256,256,512\}$ and three downsampling steps that take the
padded input from $72\!\times\!72$ through $36\!\times\!36$ and
$18\!\times\!18$ down to a $9\!\times\!9$ bottleneck, mirrored by three
upsampling steps in the decoder.
A self-attention block is inserted at the $18\!\times\!18$ stage in
both the encoder and the decoder, with the remaining stages purely
convolutional, and timestep conditioning is supplied to every residual
block via a sinusoidal embedding.

\paragraph{Noise schedule.}
We use a discrete-time DDPM scheduler with $T\!=\!1000$ training
timesteps and a linear $\beta$ schedule from
$\beta_{1}\!=\!10^{-4}$ to $\beta_{T}\!=\!2\!\times\!10^{-2}$, yielding
the VP marginals defined in Section~\ref{sec:diffusion_prelim} (see also Appendix~\ref{sec:bg_diffusion} for the complete derivation).
The full procedure for constructing the discrete schedule
$\{\beta_t,\alpha_t,\bar\alpha_t\}_{t=1}^{T}$ from the two endpoint
values is summarized in Algorithm~\ref{alg:linear_beta}.
The network is trained to predict the injected noise $\epsilon$ by the
standard DDPM noise-prediction objective.
The same schedule is reused at inference time by the deterministic DDIM
sampler $R_\theta$ (Section~\ref{sec:app_algo}), with the number of
deterministic steps reduced from $T\!=\!1000$ to $n\!=\!3$ inside the
DLO loop.

\begin{algorithm}[!htbp]
  \caption{Linear $\beta$ schedule (DDPM, VP).}
  \label{alg:linear_beta}
  \begin{algorithmic}[1]
    \REQUIRE Total steps $T=1000$, start $\beta_1=10^{-4}$,
    end $\beta_T=2\!\times\!10^{-2}$
    \FOR{$t = 1$ {\bf to} $T$}
    \STATE $\beta_t \gets \beta_1 + \dfrac{t-1}{T-1}\,(\beta_T - \beta_1)$
    \STATE $\alpha_t \gets 1 - \beta_t$
    \STATE $\bar\alpha_t \gets \prod_{s=1}^{t}\alpha_s$
    \ENDFOR
    \RETURN $\{\beta_t,\alpha_t,\bar\alpha_t\}_{t=1}^{T}$
  \end{algorithmic}
\end{algorithm}

\paragraph{Training hyperparameters.}
We optimize the noise-prediction loss with Adam at peak learning rate
$10^{-4}$ following a cosine schedule with a $500$-step linear warm-up.
We train for $400$ epochs on each family at batch size $32$, which
gives roughly $1{,}500$ optimizer steps per epoch and a total of
$\sim\!6\!\times\!10^{5}$ update steps over the $48{,}000$-velocity
training set per family.
Training and validation loss curves for the CF-B run are shown in
Fig.~\ref{fig:ddpm_loss_curve}; the other three families exhibit
qualitatively identical behaviour.
After roughly $3\!\times\!10^{5}$ optimizer steps the loss begins to
oscillate around a slowly decreasing plateau, a common signature of
the stochastic noise-prediction objective in which the per-step
target $\epsilon$ is resampled and the loss therefore retains an
irreducible variance even once $\epsilon_\theta$ has effectively
converged.

\begin{figure}[!htbp]
  \centering
  \includegraphics[width=0.7\linewidth]{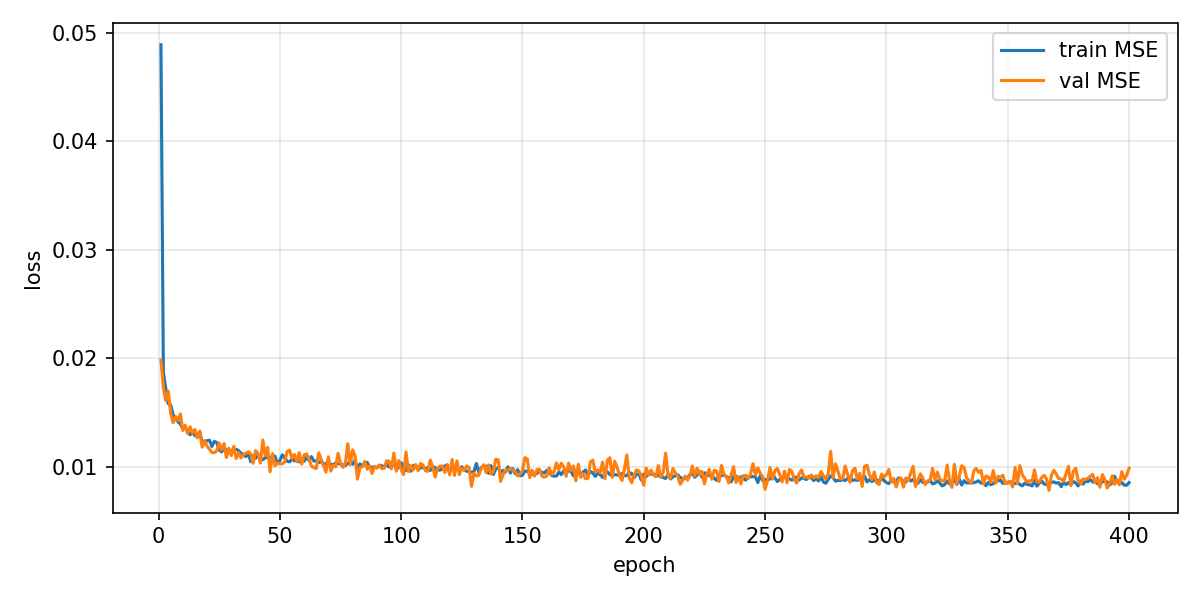}
  \caption{\textbf{Diffusion pretraining loss curve (CF-B).}
    Training and validation noise-prediction loss versus optimizer step
    for the CurveFault-B diffusion model. The remaining three families
  produce qualitatively identical curves.}
  \label{fig:ddpm_loss_curve}
\end{figure}

\subsection{Acoustic Wave-Equation Forward Solver}
\label{sec:app_solver}

The forward operator $\mathcal{F}$ in Eq.~\eqref{eq:fwi} numerically
integrates the 2D acoustic wave equation~\eqref{eq:wave} with a
Ricker wavelet source by a $4$th-order finite-difference scheme in
space and a leapfrog update in time, with a sponge absorbing layer
of quadratic damping along the four exterior boundaries.
Sources and receivers are uniformly distributed along the surface at
fixed depths.
The configuration follows the protocol used to generate the OpenFWI
datasets~\cite{deng2022openfwi}, and the same parameters are reused for the
Marmousi and Overthrust experiments, with the lateral extent
enlarged to accommodate the wider models.
Gradients of the data-misfit term with respect to the velocity model
are computed by the adjoint-state method~\cite{tarantola2005inverse}, which
back-propagates the data residual through a single reverse-time
solve of the same wave equation and contracts it with the stored
forward wavefield.
All parameters are summarized in Table~\ref{tab:forward_solver}.

\begin{table}[!htbp]
  \centering
  \caption{Forward-solver configuration. The same numerical scheme,
    source signature, and sponge boundary are used for both benchmark
    families; only the horizontal extent ($n_x$, $n_g$) changes between
  OpenFWI and the larger Marmousi/Overthrust models.}
  \label{tab:forward_solver}
  \begin{tabular}{lcccc}
    \toprule
    Parameter & Symbol & OpenFWI & Marmousi/Overthrust & Unit \\
    \midrule
    Horizontal grid points & $n_x$        & 70  & 190 & --- \\
    Vertical grid points   & $n_z$        & 70  & 70  & --- \\
    Grid spacing           & $\Delta x$   & 10.0 & 10.0 & m \\
    Time steps             & $n_t$        & 1000 & 1000 & --- \\
    Time step size         & $\Delta t$   & 0.001 & 0.001 & s \\
    Recording time         & $T$          & 1.0 & 1.0 & s \\
    Source frequency (Ricker) & $f$       & 15.0 & 15.0 & Hz \\
    Number of sources      & $n_s$        & 5   & 5   & --- \\
    Number of receivers    & $n_g$        & 70  & 190 & --- \\
    Source depth           & $z_s$        & 10  & 10  & m \\
    Receiver depth         & $z_g$        & 10  & 10  & m \\
    Boundary condition     & ---          & \multicolumn{2}{c}{Sponge layer (quadratic damping)} & --- \\
    ABC layer thickness    & $n_\mathrm{bc}$ & 120 & 120 & grid points \\
    \midrule
    Physical domain size   & ---          & $700\!\times\!700$  & $700\!\times\!1900$ & m$^{2}$ \\
    Seismic data shape     & ---          & $(5,1000,70)$ & $(5,1000,190)$ & --- \\
    \bottomrule
  \end{tabular}
\end{table}

\subsection{Hyperparameter Settings}
\label{sec:app_hyper}

To ensure a fair comparison, all methods share the same
hyperparameters across all scenarios.
For the data-misfit term, all methods minimize the $\ell_1$ norm
$\|\mathcal{F}(v)-\mathbf{d}_\mathrm{obs}\|_1$ rather than the squared
$\ell_2$ norm, as the $\ell_1$ loss provides greater robustness to
outliers in seismic recordings.
All optimization methods use the Adam optimizer with learning rate
$\eta=0.03$, run for $300$ outer iterations.
DLO uses the same optimizer for the velocity field $v$ as the other
methods, plus an independent Adam optimizer for the latent variable
$z$ with a constant learning rate $\eta_z=0.02$.
The regularization coefficients of Tikhonov ($\lambda\!=\!0.01$),
TV ($\lambda\!=\!0.01$), and RED-DiffEq ($\lambda\!=\!0.75$) follow
the values reported in RED-DiffEq~\cite{shan2026regularization} out of
empirical hyperparameter tuning, and the
manifold-tracking weight of DLO is fixed at $\lambda\!=\!0.5$
throughout.

DiffusionFWI~\cite{wang2023prior} follows a different optimization path,
in which FWI gradient steps are interleaved between successive
reverse-diffusion steps; we retain the protocol of the original code
release, starting the reverse sampler at $t\!=\!100$ on the
$T\!=\!1000$ schedule, inserting $10$ Adam steps (learning rate
$\eta\!=\!0.01$) between consecutive denoising steps.
For the stabilization techniques in the repo (gradient smoothing,
gradient normalization, and velocity-model Gaussian blur), our
cross-validation across the four OpenFWI families confirms that
velocity-model blur combined with gradient normalization most
reliably improves performance, and we enable only these two in all
reported DiffusionFWI results.

\subsection{Baseline Implementation Protocols}
\label{sec:app_baselines}

All baselines operate on velocity fields normalized to $[-1,1]$ by
$x = (v_{\,\mathrm{m/s}}-3000)/1500$ and are initialized from a
Gaussian-smoothed ground-truth model with standard deviation
$\sigma_\mathrm{init}=10$.

\paragraph{Tikhonov.}
We apply a first-order Tikhonov regularizer that penalizes the squared
gradient of the velocity model,
\begin{equation}
  \label{eq:tikhonov_reg}
  R_\mathrm{Tikhonov}(x)
  = \frac{1}{N}\sum_{i,j}
  \Bigl[(x_{i+1,j}-x_{i,j})^{2} + (x_{i,j+1}-x_{i,j})^{2}\Bigr],
\end{equation}
where $N$ is the number of grid points.
This penalty discourages abrupt variations and yields smooth solutions
at the cost of fine-scale detail.

\paragraph{Total variation.}
We use anisotropic TV, which promotes piecewise-constant structure and
preserves sharp velocity discontinuities,
\begin{equation}
  \label{eq:tv_reg}
  R_\mathrm{TV}(x)
  = \frac{1}{N}\sum_{i,j}
  \Bigl[\,|x_{i+1,j}-x_{i,j}| + |x_{i,j+1}-x_{i,j}|\,\Bigr],
\end{equation}
which has a known tendency to introduce staircase artifacts in regions
of smooth velocity gradient.

\paragraph{RED-DiffEq.}
We use the denoiser-based regularizer of
Eq.~\eqref{eq:prior_velocity}, with the time-dependent weighting in
the original formulation replaced by a fixed coefficient $\lambda$
following~\cite{shan2026regularization}.

\paragraph{DiffusionFWI.}
We adapt the official implementation of~\cite{wang2023prior} to our
acoustic forward solver, replacing the original elastic engine while
retaining the nested optimization protocol described in
Section~\ref{sec:app_hyper}.

\subsection{Computational Cost}
\label{sec:app_algo}

For stable optimization and computational efficiency, we instantiate the
DDIM sampler with $n=3$ deterministic steps as the prior velocity field
generator.
Table~\ref{tab:dlo_runtime} summarizes the per-iteration runtime breakdown
of DLO.
Compared to a standard FWI iteration (forward PDE solver plus adjoint
gradient), optimization with the additional DDIM-related operations costs
roughly $220\%$ of a standard FWI iteration under our implementation.

Compared to classical FWI methods, DLO loads a pretrained diffusion
model and therefore occupies additional GPU memory.
Peak GPU memory usage during the DLO step is approximately
$4600$\,MB, well below the capacity of modern GPUs.
The additional computational cost and memory usage at this level are
acceptable, given the inversion performance achieved by diffusion-based
methods such as DLO.

\begin{table}[!htbp]
  \centering
  \caption{\textbf{Per-iteration runtime breakdown of DLO.}
    Measured on the OpenFWI $70\!\times\!70$ velocity grid and averaged over
  $100$ iterations; $\pm$ denotes one standard deviation.}
  \label{tab:dlo_runtime}
  \begin{tabular}{lcc}
    \toprule
    Operation & Time (s) & Fraction \\
    \midrule
    Forward PDE solver               & $0.0294 \pm 0.0008$ & $15.1\%$ \\
    DDIM fwd$+$bwd ($z$-update)      & $0.0655 \pm 0.0036$ & $33.7\%$ \\
    DDIM decode (no grad)            & $0.0408 \pm 0.0032$ & $21.0\%$ \\
    $v$-backward (adjoint)             & $0.0587 \pm 0.0009$ & $30.2\%$ \\
    \midrule
    Total per DLO iteration          & $0.1946 \pm 0.0044$ & $100\%$ \\
    \bottomrule
  \end{tabular}
\end{table}